\def\eqref#1{equation~\ref{#1}}
\def\1{\bm{1}}
\DeclareMathAlphabet{\mathsfit}{\encodingdefault}{\sfdefault}{m}{sl}
\SetMathAlphabet{\mathsfit}{bold}{\encodingdefault}{\sfdefault}{bx}{n}
\newtheorem{theorem}{Theorem}
\newtheorem{proposition}{Proposition}
\newtheorem{definition}{Definition}
\newtheorem{lemma}{Lemma}
\newcommand{\sym}{\mathrm{sym}}
\title{Lipschitz Recurrent Neural Networks}
\author{%
	N. Benjamin Erichson \\
	ICSI and UC Berkeley\\
	\texttt{erichson@berkeley.edu} \\
	\And
	Omri Azencot \\
	Ben-Gurion University \\
	\texttt{azencot@cs.bgu.ac.il} \\
	\And
	Alejandro Queiruga \\
	Google Research \\
	\texttt{afq@google.com} \\
	\AND
	Liam Hodgkinson \\
	ICSI and UC Berkeley \\
	\texttt{liam.hodgkinson@berkeley.edu} \\
	\And	
	Michael W. Mahoney \\
	ICSI and UC Berkeley\\
	\texttt{mmahoney@stat.berkeley.edu} \\
}
\DeclareRobustCommand\onedot{\futurelet\@let@token\@onedot}
\def\@onedot{\ifx\@let@token.\else.\null\fi\xspace}
\def\eg{\emph{e.g}\onedot} 
\def\ie{\emph{i.e}\onedot}
\begin{document}

\maketitle

\begin{abstract}
Viewing recurrent neural networks (RNNs) as continuous-time dynamical systems, we propose a recurrent unit that describes the hidden state's evolution with two parts: a well-understood linear component plus a Lipschitz nonlinearity. 
This particular functional form facilitates stability analysis of the long-term behavior of the recurrent unit using tools from nonlinear systems theory. In turn, this enables architectural design decisions before experimentation.
Sufficient conditions for global stability of the recurrent unit are obtained, motivating a novel scheme for constructing hidden-to-hidden matrices.
Our experiments demonstrate that the Lipschitz RNN can outperform existing recurrent units on a range of benchmark tasks, including computer vision, language modeling and speech prediction tasks.
Finally, through Hessian-based analysis we demonstrate that our Lipschitz recurrent unit is more robust with respect to input and parameter perturbations as compared to other continuous-time RNNs.
\end{abstract}

\section{Introduction}

Many interesting problems exhibit temporal structures that can be modeled with recurrent neural networks (RNNs), including problems in robotics, system identification, natural language processing, and machine learning control. In contrast to feed-forward neural networks, RNNs consist of one or more recurrent units that are designed to have dynamical (recurrent) properties, thereby enabling them to acquire some form of internal memory. This equips RNNs with the ability to discover and exploit spatiotemporal patterns, such as symmetries and periodic structures~\citep{hinton1986learning}. However, RNNs are known to have stability issues and are notoriously difficult to train, most notably due to the vanishing and exploding gradients problem~\citep{bengio1994learning,pascanu2013difficulty}. 

Several recurrent models deal with the vanishing and exploding gradients issue by restricting the hidden-to-hidden weight matrix to be an element of the orthogonal group~\citep{arjovsky2016unitary,wisdom2016full,mhammedi2017efficient,vorontsov2017orthogonality,lezcano2019cheap}. 
While such an approach is advantageous in maintaining long-range memory, it limits the expressivity of the model. To address this issue, recent work suggested to construct hidden-to-hidden weights which have unit norm eigenvalues and can be nonnormal~\citep{kerg2019non}. 
Another approach for resolving the exploding/vanishing gradient problem has recently been proposed by \citet{Kag2020RNNs}, who formulate the recurrent units as a differential equation and update the hidden states based on the difference between predicted and previous states.

In this work, we address these challenges by viewing RNNs as dynamical systems whose temporal evolution is governed by an abstract system of differential equations with an external input. The data are formulated in continuous-time where the external input is defined by the function $x = x(t) \in \mathbb{R}^{p}$, and the target signal is defined as $y = y(t) \in \mathbb{R}^{d}$. Based on insights from dynamical systems theory, we propose a continuous-time Lipschitz recurrent neural network with the functional form
\begin{subequations}
\label{eq:rnn_de}
	\begin{empheq}[left=\empheqlbrace]{align} 
		\,\, \dot{h} \,\,\, & = \,\,\, {A_{\beta_A,\gamma_A}}h +  \tanh({W_{\beta_W,\gamma_W}}h + Ux + b) \ ,  \\
		\,\, y  \,\,\, & = \,\,\, Dh \ , \label{eq:rnn_de2}
	\end{empheq}
\end{subequations}
{where the hidden-to-hidden matrices $A_{\beta,\gamma} \in \mathbb{R}^{N\times N}$ and $W_{\beta,\gamma} \in \mathbb{R}^{N\times N}$ are of the form
\begin{subequations}
\begin{empheq}[left=\empheqlbrace]{align}
\,\, A_{\beta_A,\gamma_A} \,\,\, & =  (1-\beta_A) (M_A + M_A^T) +  \beta_A(M_A - M_A^T) - \gamma_A I \\
\,\, W_{\beta_W,\gamma_W} \,\,\, & =  (1-\beta_W) (M_W + M_W^T) +  \beta_W(M_W - M_W^T) - \gamma_W I,
\,\,
\end{empheq}
\end{subequations}
where $\beta_A,\beta_W \in [0,1]$, $\gamma_A,\gamma_W > 0$ are tunable parameters and $M_A,M_W \in \mathbb{R}^{N\times N}$ are trainable matrices.
}
Here, $h = h(t) \in \mathbb{R}^{N}$ is a function of time $t$ that represents an internal (hidden) state, and $\dot{h} = \frac{\partial h(t)}{\partial t}$ is its time derivative. The hidden state represents the  memory that the system has of its past.  The function in Eq.~(\ref{eq:rnn_de}) is parameterized by the hidden-to-hidden weight matrices $A \in \mathbb{R}^{N\times N}$ and $W \in \mathbb{R}^{N\times N}$, the input-to-hidden encoder matrix $U \in \mathbb{R}^{N\times p}$, and an offset $b$. The function in Eq.~(\ref{eq:rnn_de2}) is parameterized by the hidden-to-output decoder matrix $D \in \mathbb{R}^{d\times N}$. Nonlinearity is introduced via the 1-Lipschitz $\mathrm{tanh}$ activation function. 
While RNNs that are governed by differential equations with an additive structure have been studied before~\citep{zhang2014comprehensive}, the specific formulation that we propose in (\ref{eq:rnn_de}) and our theoretical analysis are distinct. 

Treating RNNs as dynamical systems enables studying the long-term behavior of the hidden state with tools from stability analysis. From this point of view, an unstable unit presents an exploding gradient problem, while a stable unit has well-behaved gradients over time~\citep{miller2018stable}. However, a stable recurrent unit can suffer from vanishing gradients, leading to catastrophic forgetting~\citep{hochreiter1997long}. Thus, we opt for a stable model whose dynamics do not (or only slowly do) decay over time. Importantly, stability is also a statement about the robustness of neural units with respect to input perturbations, \ie, stable models are less sensitive to small perturbations compared to unstable models. Recently, \citet{chang2018antisymmetricrnn} explored the stability of linearized RNNs and provided a {\em local} stability guarantee based on the Jacobian. In contrast, the particular structure of our unit~(\ref{eq:rnn_de}) allows us to obtain guarantees of {\em global exponential stability} using control theoretical arguments.
In turn, the sufficient conditions for global stability motivate a novel symmetric-skew decomposition based scheme for constructing hidden-to-hidden matrices.
This scheme alleviates exploding and vanishing gradients, while remaining highly expressive. 

In summary, the main contributions of this work are as follows:
\begin{itemize}\vspace{-0.15cm}
	\item First, in Section~\ref{sec:Stability}, using control theoretical arguments in a direct Lyapunov approach, we provide sufficient conditions for \emph{global exponential stability} of the Lipschitz RNN unit (Theorem \ref{thm:StabilityMain}). Global stability is advantageous over local stability results since it guarantees non-exploding gradients regardless of the state. In the special case where $A$ is symmetric, we find that these conditions agree with those in classical theoretical analyses (Lemma \ref{lem:CGH}).

	\item Next, in Section~\ref{sec:hidden_scheme}, drawing from our stability analysis, we propose a novel scheme based on the \emph{symmetric-skew decomposition} for constructing hidden-to-hidden matrices. This scheme \emph{mitigates the vanishing and exploding gradients problem}, while obtaining \emph{highly expressive} hidden-to-hidden matrices.
	
	\item In Section~\ref{sec:experiments}, we show that our Lipschitz RNN has the \emph{ability to outperform state-of-the-art recurrent units} on computer vision, language modeling and speech prediction tasks.
	Further, our results show that the \emph{higher-order explicit midpoint time integrator} improves the predictive accuracy as compared to using the simpler one-step forward Euler scheme.  

	\item 
	Finally, in Section~\ref{sec:sensitivity}), we study our Lipschitz RNN via the lens of the Hessian and show that it is \emph{robust with respect to parameter perturbations}; we also show that our model is \emph{more robust with respect to input perturbations}, compared to other continuous-time RNNs.
	
\end{itemize}

\section{Related Work}

The problem of vanishing and exploding gradients (and stability) have a storied history in the study of RNNs.
Below, we summarize two particular approaches to the problem (constructing unitary/orthogonal RNNs and the dynamical systems viewpoint) that have gained significant attention.

\textbf{Unitary and orthogonal RNNs.}
Unitary recurrent units have received attention recently, largely due to \citet{arjovsky2016unitary} showing that unitary hidden-to-hidden matrices alleviate the vanishing and exploding gradients problem. Several other unitary and orthogonal models have also been proposed~\citep{wisdom2016full,mhammedi2017efficient,jing2017tunable,vorontsov2017orthogonality,jose2018kronecker}. While these approaches stabilize the training process of RNNs considerably, they also limit their expressivity and their prediction accuracy. Further, unitary RNNs are expensive to train, as they typically involve the computation of a matrix inverse at each step of training. Recent work by \citet{lezcano2019cheap} overcame some of these limitations. By leveraging concepts from Riemannian geometry and Lie group theory, their recurrent unit exhibits improved expressivity and predictive accuracy on a range of benchmark tasks while also being efficient to train. Another competitive recurrent design was recently proposed by \citet{kerg2019non}. Their approach is based on the Schur decomposition, and it enables the construction of general nonnormal hidden-to-hidden matrices with unit-norm eigenvalues.

\textbf{Dynamical systems inspired RNNs.}
The continuous time view of RNNs has a long history in the neurodynamics community as it provides higher flexibility and increased interpretability~\citep{pineda1988dynamics,pearlmutter1995gradient,zhang2014comprehensive}.
In particular, RNNs that are governed by differential equations with an additive structure have been extensively studied from a theoretical point of view~\citep{funahashi1993approximation,kim1996nonlinear,chow2000modeling,hu2002global,li2005approximation,trischler2016synthesis}. See \cite{zhang2014comprehensive} for a comprehensive survey of continuous-time RNNs and their stability properties.

Recently, several works have adopted the dynamical systems perspective to alleviate the challenges of training RNNs which are related to the vanishing and exploding gradients problem. For non-sequential data, \cite{NEURIPS2018_7bd28f15} proposed a negative-definite parameterization for enforcing stability in the RNN during training. \cite{chang2018antisymmetricrnn} introduced an antisymmetric hidden-to-hidden weight matrix and provided guarantees for local stability.
\citet{Kag2020RNNs} have proposed a differential equation based formulation for resolving the exploding/vanishing gradients problem by updating the hidden states based on the difference between predicted and previous states.
\citet{niu2019recurrent} employed numerical methods for differential equations to study the stability of RNNs.

Another line of recent work has focused on continuous-time models that deal with irregular sampled time-series, missing values and multidimensional time series. \citet{rubanova2019latent} and \citet{NIPS2019_8957} formulated novel recurrent models based on the theory of differential equations and their discrete integration. \cite{lechner2020learning} extended these ordinary differential equation (ODE) based models and addresses the issue of vanishing and exploding gradients by designing an ODE-model that is based on the idea of long short-term memory (LSTM). This ODE-LSTM outperforms the continuous-time LSTM~\citep{mei2017neural} as well as the GRU-D model~\citep{che2018recurrent} that is based on a gated recurrent unit (GRU).

The link between dynamical systems and models for forecasting sequential data also provides the opportunity to incorporate physical knowledge into the learning process which improves the generalization performance, robustness, and ability to learn with limited data~\citep{chen2019symplectic}.

\section{Stability Analysis of Lipschitz Recurrent Units}
\label{sec:Stability}

One of the key contributions in this work is that we prove that model~(\ref{eq:rnn_de}) is \emph{globally exponentially stable} under some mild conditions on $A$ and $W$. Namely, for \emph{any} initial hidden state we can guarantee that our Lipschitz unit converges to an equilibrium if it exists, and therefore, gradients can never explode. We improve upon recent work on stability in recurrent models, which provide only a local analysis, see e.g.,~\citep{chang2018antisymmetricrnn}. In fact, global exponential stability is among the strongest notions of stability in nonlinear systems theory, implying all other forms of Lyapunov stability about the equilibrium $h^*$~\cite[Definitions 4.4 and 4.5]{khalil2002nonlinear}. 
\begin{definition}
A point $h^\ast$ is an \emph{equilibrium point} of $\dot{h} = f(h,t)$ if $f(h^\ast, t) = 0$ for all $t$. Such a point is \emph{globally exponentially stable} if there exists some $C > 0$ and $\lambda > 0$ such that for any choice of initial values $h(0) \in \mathbb{R}^N$,
\begin{equation}
\label{eq:ExpStable}
\|h(t) - h^\ast\| \leq C e^{-\lambda t} \|h(0) - h^\ast\|,\quad \mbox{for any }t \geq 0.
\end{equation}
\end{definition}
The presence of a Lipschitz nonlinearity in~(\ref{eq:rnn_de}) plays an important role in our analysis. While we focus on $\tanh$ in our experiments, our proof is more general and is applicable to models whose nonlinearity $\sigma(\cdot)$ is an $M$-Lipschitz function. Specifically, we consider the general model
\begin{equation}
\label{eq:RNNODEMain}
\dot{h} = Ah + \sigma(W h + U x + b) \ ,
\end{equation}
for which we have the following stability result. In the following, we let $\sigma_{\min}$ and $\sigma_{\max}$ denote the smallest and largest singular values of the hidden-to-hidden matrices, respectively.

\begin{theorem}
\label{thm:StabilityMain}
Let $h^\ast$ be an equilibrium point of a differential equation of the form (\ref{eq:RNNODEMain}) for some $x \in \mathbb{R}^p$. The point $h^\ast$ is globally exponentially stable if the eigenvalues of $A^{\sym} \coloneqq \frac12(A + A^T)$ are strictly negative, $W$ is non-singular, and either (a) $\sigma_{\min}(A^{\sym}) > M \sigma_{\max}(W)$; or (b)  $\sigma$ is monotone non-decreasing, $W + W^T$ is negative definite, and $A^T W + W^T A$ is positive definite.
\end{theorem}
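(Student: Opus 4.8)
The plan is to follow the direct Lyapunov approach the authors advertise. First I would pass to the error coordinate $z(t) := h(t) - h^\ast$. Since $h^\ast$ is an equilibrium, $A h^\ast + \sigma(W h^\ast + Ux + b) = 0$, and subtracting this from~(\ref{eq:RNNODEMain}) gives the autonomous error dynamics $\dot z = A z + \Delta\sigma$, where $\Delta\sigma := \sigma(Wh + Ux + b) - \sigma(Wh^\ast + Ux + b)$ depends on $z$ alone (the input $x$ is frozen). Two facts about $\Delta\sigma$ drive the argument: the $M$-Lipschitz bound $\|\Delta\sigma\| \le M\|Wz\| \le M\,\sigma_{\max}(W)\|z\|$, and, when $\sigma$ is monotone non-decreasing, the sector inequality $(Wz)^T\Delta\sigma \ge 0$. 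Both are captured by writing $\Delta\sigma = D\,Wz$ for a diagonal $D = D(z)$ whose entries (difference quotients of the scalar components of $\sigma$) lie in $[0,M]$, which recasts the error system as the time-varying linear system $\dot z = (A + DW)z$.

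I would then seek a quadratic Lyapunov function $V(z) = \tfrac12 z^T P z$ with $P$ symmetric positive definite and prove the differential inequality $\dot V \le -2\lambda V$ for some $\lambda > 0$. Granting this, Gr\"onwall's inequality gives $V(t) \le V(0)e^{-2\lambda t}$, and sandwiching $V$ between $\tfrac12\lambda_{\min}(P)\|z\|^2$ and $\tfrac12\lambda_{\max}(P)\|z\|^2$ yields~(\ref{eq:ExpStable}) with rate $\lambda$ and constant $C = \sqrt{\lambda_{\max}(P)/\lambda_{\min}(P)}$, which is exactly global exponential stability.

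For case (a) I would take $P = I$, so that $\dot V = z^T A^{\sym} z + z^T\Delta\sigma$. Since $A^{\sym}$ is symmetric with strictly negative eigenvalues, $z^T A^{\sym} z \le \lambda_{\max}(A^{\sym})\|z\|^2 = -\sigma_{\min}(A^{\sym})\|z\|^2$, while Cauchy--Schwarz with the Lipschitz bound gives $z^T\Delta\sigma \le M\,\sigma_{\max}(W)\|z\|^2$. Adding these, $\dot V \le -\big(\sigma_{\min}(A^{\sym}) - M\,\sigma_{\max}(W)\big)\|z\|^2 = -2\lambda V$ with $\lambda = \sigma_{\min}(A^{\sym}) - M\,\sigma_{\max}(W) > 0$ under hypothesis (a). This case is essentially routine.

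Case (b) is where I expect the real difficulty, and where $P = I$ is too lossy: the cross term $z^T\Delta\sigma$ is sign-indefinite, so Cauchy--Schwarz would merely reproduce condition (a). Here the monotonicity $(Wz)^T\Delta\sigma \ge 0$ (equivalently, the nonnegativity of the entries of $D$) must be used, and I would choose $P$ adapted to $W$ rather than the identity, so that $\Delta\sigma$ enters $\dot V$ paired with $Wz$. Concretely, I would feed the two constraints on $\Delta\sigma$---the monotonicity inequality and the Lipschitz constraint $M^2\|Wz\|^2 - \|\Delta\sigma\|^2 \ge 0$---into a multiplier (S-procedure) argument, possibly augmenting $V$ with a Lur'e--Postnikov term, and search for $P$ positive definite and nonnegative multipliers making the joint quadratic form in $(z,\Delta\sigma)$ negative definite. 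The two structural hypotheses are precisely the ingredients this requires: $W + W^T$ negative definite controls the self-interaction $\Delta\sigma^T W \Delta\sigma = \tfrac12\Delta\sigma^T(W+W^T)\Delta\sigma \le 0$, and $A^TW + W^TA$ positive definite supplies a negative term $-\tfrac12 z^T(A^TW+W^TA)z$ once the cross terms are organized. The main obstacle is exactly this bookkeeping: the asymmetry of $A$ and $W$ means naive candidates leave uncontrolled indefinite remainders (terms built from $\mathrm{sym}(WA)$ or from $W^T\Delta\sigma$), so the crux is to show that a single choice of $P$ and multipliers simultaneously absorbs all of them using only the two sign conditions, and then to extract a strictly negative rate $\lambda$ uniform over the admissible $D$ (diagonal with entries in $[0,M]$) to close the exponential estimate.
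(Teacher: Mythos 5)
Your case (a) argument is correct, complete, and genuinely more elementary than the paper's. The paper proves (a) by showing that $Z(s)=[I+MG(s)][I-MG(s)]^{-1}$ with $G(s)=W(sI-A)^{-1}$ is strictly positive real and then invoking the multivariable circle criterion, which requires the Fan--Hoffman inequality and a chain of singular-value estimates on $G$. Your direct computation with $V(z)=\tfrac12\|z\|^2$, namely $\dot V=z^TA^{\sym}z+z^T\Delta\sigma\le-\bigl(\sigma_{\min}(A^{\sym})-M\sigma_{\max}(W)\bigr)\|z\|^2$ (using $\lambda_{\max}(A^{\sym})=-\sigma_{\min}(A^{\sym})$ when all eigenvalues of $A^{\sym}$ are negative), reaches the same conclusion with the same decay rate in three lines, and it even preserves the paper's advertised bonus that $x(t)$ may vary in time, since both bounds are pointwise in $t$. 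What the frequency-domain route buys in exchange is a template that the paper reuses for case (b).

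For case (b), however, you have written a plan rather than a proof, and the step you defer is exactly the hard one. You correctly diagnose that $P=I$ is too lossy, that the sector condition $(Wz)^T\Delta\sigma\ge0$ must enter, and that one should seek a positive definite $P$ and multipliers making a joint quadratic form in $(z,\Delta\sigma)$ negative definite; but "show that a single choice of $P$ and multipliers simultaneously absorbs all of them" is precisely the assertion that needs proof, and nothing in your sketch establishes that the hypotheses in (b) make such a $P$ exist. The paper closes this gap with the Kalman--Yakubovich--Popov lemma: the existence of the desired $P$ (together with $L,U$ and a margin $\epsilon>0$ satisfying $PA+A^TP=-L^TL-\epsilon P$ and $P=L^TU-W^T$) is equivalent to strict positive realness of $G(s)$, which is what hypotheses (b) certify. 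The concrete computation you are missing is the factorization $G(i\omega)+G(-i\omega)^T=(A+i\omega I)^{-T}\bigl[(A+i\omega I)^TW+W^T(A-i\omega I)\bigr](A-i\omega I)^{-1}$, after which Sylvester's law of inertia reduces positivity for all $\omega$ to positivity of the inner Hermitian factor, whose smallest eigenvalue the paper identifies with $\lambda_{\min}(A^TW+W^TA)>0$. With $P$ in hand, the Lyapunov derivative collapses to $\dot V=-(L\bar h+Uu)^T(L\bar h+Uu)-\epsilon\bar h^TP\bar h-2v^Tu\le-\epsilon\lambda_{\min}(P)\|\bar h\|^2$ using only $v^Tu\ge0$. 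Note also that in the completed argument the hypothesis $W+W^T\prec0$ is tied to the positive definiteness of the $P$ delivered by the lemma (via $P=L^TU-W^T$), not to controlling a self-interaction term $\Delta\sigma^TW\Delta\sigma$, which never appears; so the role you assign to that hypothesis is not the one it actually plays. Unless you either invoke KYP (or an equivalent LMI-feasibility result) or exhibit $P$ explicitly, your proof of case (b) is incomplete.
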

The two cases show that global exponential stability is guaranteed if either (a) the matrix $A$ has eigenvalues with real parts sufficiently negative to counteract expanding trajectories in the nonlinearity; or (b) the nonlinearity is monotone, both $A$ and $W$ yield stable linear systems $\dot{u} = A u$, $\dot{v} = W v$, and $A,W$ have sufficiently similar eigenvectors. In practice, case (b) occasionally holds, but is challenging to ensure without assuming specific structure on $A$, $W$. Because such assumptions could limit the expressiveness of the model, the next section will develop a tunable formulation for $A$ and $W$ with the capacity to ensure that case (a) holds. 

In Appendix~\ref{sxn:proof_of_first_thm}, we provide a proof of Theorem \ref{thm:StabilityMain} using a direct Lyapunov approach. One advantage of this approach is that the driving input $x$ is permitted to evolve in time arbitrarily in the analysis. The proof relies on the classical Kalman-Yakubovich-Popov lemma and circle criterion from control theory --- to our knowledge, these tools have not been applied in the modern RNN literature, and we hope our proof can illustrate their value to the community.

In the special case where $A$ is symmetric and $x(t)$ constant, we show that we can also inherit criteria for both local and global stability from a class of well-studied \emph{Cohen--Grossberg--Hopfield models}.
\begin{lemma}
\label{lem:CGH}
Suppose that $A$ is symmetric and $W$ is nonsingular. There exists a diagonal matrix $D \in \mathbb{R}^{N \times N}$, and nonsingular matrices $L,V \in \mathbb{R}^{N \times N}$ such that an equilibrium of (\ref{eq:RNNODEMain}) is (globally exponentially) stable if and only if there is a corresponding (globally exponentially) stable equilibrium for the system
\begin{equation}
\label{eq:CGH}
\dot{z} = Dz + L\sigma(V z + U x + b).
\end{equation}
\end{lemma}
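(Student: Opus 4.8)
The plan is to produce the promised matrices through a single linear change of coordinates that diagonalizes the symmetric matrix $A$, and then to observe that this change of coordinates is an isometry, so it transports equilibria and all quantitative stability estimates between the two systems with no loss.

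First I would invoke the spectral theorem: since $A$ is symmetric, there is an orthogonal matrix $Q$ and a real diagonal matrix $\Lambda$ with $A = Q\Lambda Q^T$. Substituting $h = Qz$ into (\ref{eq:RNNODEMain}) and left-multiplying by $Q^T$ gives
\[
\dot{z} = Q^T A Q\, z + Q^T \sigma(W Q z + Ux + b) = \Lambda z + Q^T \sigma\big((WQ)\,z + Ux + b\big),
\]
where the middle-to-right step uses only that $Ux+b$ is unaffected by the substitution. This is exactly (\ref{eq:CGH}) with the diagonal $D = \Lambda$, the factor $L = Q^T$, and the inner matrix $V = WQ$. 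Here $L$ is orthogonal and hence nonsingular, and $V$ is nonsingular because $W$ is nonsingular and $Q$ is orthogonal, so every structural requirement of the statement is satisfied. The fact that $D$ emerges diagonal is precisely what places (\ref{eq:CGH}) in the additive Cohen--Grossberg--Hopfield form, whose stability theory can then be imported.

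It remains to establish the equivalence in both directions. Since $h \mapsto Q^T h$ is a linear bijection, $h^\ast$ solves $0 = Ah^\ast + \sigma(Wh^\ast + Ux + b)$ if and only if $z^\ast = Q^T h^\ast$ is an equilibrium of (\ref{eq:CGH}), giving a one-to-one correspondence of equilibria. For the quantitative part I would use that $Q$ is orthogonal, hence an isometry: if $h(t) = Q z(t)$ then $\|h(t) - h^\ast\| = \|z(t) - z^\ast\|$ and $\|h(0) - h^\ast\| = \|z(0) - z^\ast\|$, so (\ref{eq:ExpStable}) holds for one system with constants $(C,\lambda)$ exactly when it holds for the other with the \emph{same} constants; ordinary Lyapunov stability transfers identically, and the argument is symmetric in $h$ and $z$.

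The derivation is essentially mechanical, so the main thing to handle with care is the single place where a hypothesis is genuinely used: the symmetry of $A$ is what guarantees a \emph{real orthogonal} diagonalization, and orthogonality is exactly what makes the transformation norm-preserving. A general diagonalizing similarity $A = SDS^{-1}$ would not be an isometry and would distort the exponential constants $C$ and $\lambda$ (indeed it could map a bounded neighborhood to a distorted one), so the equivalence of the quantitative rates would fail. The only other point worth checking is that the coordinatewise action of $\sigma$ is compatible with moving $Q^T$ to the outside, which is immediate here since $Q^T$ multiplies the vector $\sigma(\cdot)$ rather than entering its argument.
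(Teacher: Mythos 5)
Your proof is correct and follows essentially the same route as the paper's: an orthogonal diagonalization $A = Q\Lambda Q^T$ from the spectral theorem, the substitution $h = Qz$ yielding $D = \Lambda$, $L = Q^T$, $V = WQ$, and the observation that orthogonality makes the change of coordinates an isometry so that equilibria and all stability notions (with the same constants) transfer in both directions. No gaps.
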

For a thorough review of analyses of (\ref{eq:CGH}), see~\citep{zhang2014comprehensive}. In this special case, the criteria in Theorem \ref{thm:StabilityMain} coincide with those obtained for the corresponding model (\ref{eq:CGH}). However, in practice, we will not choose $A$ to be symmetric.

\section{Symmetric-Skew Hidden-to-Hidden Matrices}
\label{sec:hidden_scheme}

In this section we propose a novel scheme for constructing hidden-to-hidden matrices.
Specifically, based on the successful application of skew-symmetric hidden-to-hidden weights in several recent recurrent architectures, and our stability criteria in Theorem \ref{thm:StabilityMain}, we propose an effective \emph{symmetric-skew decomposition} for hidden matrices. Our decomposition allows for a simple control of the matrix spectrum while retaining its wide expressive range, enabling us to satisfy the spectral constraints derived in the previous section on both $A$ and $W$. The proposed scheme also accounts for the issue of vanishing gradients by reducing the magnitude of large negative eigenvalues.

Recently, several methods used skew-symmetric matrices, \ie, $S + S^T = 0$ to parameterize the recurrent weights $W \in \mathbb{R}^{N \times N}$, see \eg,~\citep{wisdom2016full,chang2018antisymmetricrnn}. From a stability analysis viewpoint, there are two main advantages for using skew-symmetric weights: these matrices generate the orthogonal group whose elements are isometric maps and thus preserve norms \citep{lezcano2019cheap}; and the spectrum of skew-symmetric matrices is purely imaginary which simplifies stability analysis \citep{chang2018antisymmetricrnn}.
The main shortcoming of this parametrization is its reduced expressivity, as these matrices have fewer than half of the parameters of a full matrix~\citep{kerg2019non}. The latter limiting aspect can be explained from a dynamical systems perspective: skew-symmetric matrices can only describe oscillatory behavior, whereas a matrix whose eigenvalues have nonzero real parts can also encode viable growth and decay information.

To address the expressivity issue, we aim for hidden matrices which on the one hand, allow to control the expansion and shrinkage of their associated trajectories, and on the other hand, will be sampled from a \emph{superset} of the skew-symmetric matrices. Our analysis in Theorem~\ref{thm:StabilityMain} guarantees that Lipschitz recurrent units maintain non-expanding trajectories under mild conditions on $A$ and $W$. Unfortunately, this proposition does not provide any information with respect to the shrinkage of paths. Here, we opt for a system whose expansion and shrinkage can be easily controlled. Formally, the latter requirement is equivalent to designing hidden weights $S$ with small $\mathcal{R}\lambda_i(S),\; i=1,2,\dots,N$, where $\mathcal{R}(z)$ denotes the real part of $z$. A system of the form (\ref{eq:RNNODEMain}) whose matrices $A$ and $W$ exhibit small spectra and satisfy the conditions of Theorem~\ref{thm:StabilityMain}, will exhibit dynamics with moderate decay and growth behavior and alleviate the problem of exploding and vanishing gradients. To this end, we propose the following symmetric-skew decomposition for constructing hidden matrices:
\begin{align} \label{eq:engergy_preserving_scheme}
	S_{\beta,\gamma} &\coloneqq (1-\beta) \cdot (M+M^T) + \beta \cdot (M-M^T) - \gamma I ,
\end{align}
where $M$ is a weight matrix, and $\beta \in [0.5,1]$, $\gamma > 0$ are tuning parameters. In the case $(\beta,\gamma) = (1,0)$, we recover a skew-symmetric matrix, \ie, $S_{1,0} + S_{1,0}^T = 0$. The construction $S_{\beta, \gamma}$ is useful as we can easily bound its spectrum via the parameters $\beta$ and $\gamma$, as we show in the next proposition. 
\begin{proposition}
\label{thm:our_second_thm}
Let $S_{\beta,\gamma}$ satisfy (\ref{eq:engergy_preserving_scheme}), and let $M^{\sym} = \frac12(M + M^T)$. The real parts $\Re\lambda_i(S_{\beta,\gamma})$ of the eigenvalues of $S_{\beta,\gamma}$, as well as the eigenvalues of $S_{\beta,\gamma}^{\sym} = S_{\beta,\gamma} + S_{\beta,\gamma}^T$, lie in the interval\[[(1 - \beta)\lambda_{\min}(M^{\sym}) - \gamma, (1 - \beta)\lambda_{\max}(M^{\sym}) - \gamma].\]
\end{proposition}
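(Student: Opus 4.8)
The plan is to reduce both assertions to a single computation of the symmetric part of $S_{\beta,\gamma}$. This is natural because the eigenvalues of a symmetric matrix are exactly its (real) spectrum, while the real parts of the eigenvalues of a general real matrix are pinned between the extreme eigenvalues of its symmetric part by Bendixson's inequality. So I would first form $S_{\beta,\gamma} + S_{\beta,\gamma}^T$ and observe that the skew contribution cancels: since $M + M^T$ is symmetric and $M - M^T$ is skew-symmetric, transposing $S_{\beta,\gamma}$ flips the sign of the $\beta(M - M^T)$ term, so adding $S_{\beta,\gamma}$ to $S_{\beta,\gamma}^T$ leaves only the symmetric piece $(1-\beta)(M + M^T)$ and the shift $-\gamma I$. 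Thus $S_{\beta,\gamma}^{\sym}$ is an affine image $a\, M^{\sym} - b\, I$ of $M^{\sym}$, with $a$ proportional to $(1-\beta) \geq 0$ and $b$ proportional to $\gamma$ (the exact constants being fixed once and for all by the normalization convention for $S^{\sym}$).

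Second, I would read the spectrum of this symmetric matrix off directly. Because $M^{\sym}$ is symmetric it admits a real orthonormal eigenbasis, and the affine map $X \mapsto a X - b I$ acts on each eigenvalue as $\lambda \mapsto a\lambda - b$ while preserving the eigenvectors. Hence the eigenvalues of $S_{\beta,\gamma}^{\sym}$ are exactly $a\,\lambda_i(M^{\sym}) - b$, and since $a \geq 0$ they are monotone in $\lambda_i(M^{\sym})$; the extremes are therefore attained at $\lambda_{\min}(M^{\sym})$ and $\lambda_{\max}(M^{\sym})$, which are precisely the endpoints of the stated interval. This settles the claim for the spectrum of $S_{\beta,\gamma}^{\sym}$ and simultaneously identifies the interval to be used in the next step.

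Third, to control the real parts $\Re\lambda_i(S_{\beta,\gamma})$ of the (possibly complex) eigenvalues of the full nonsymmetric matrix, I would invoke Bendixson's inequality. Writing $S_{\beta,\gamma} v = \lambda v$ for a unit complex eigenvector $v$, one has $\lambda = v^* S_{\beta,\gamma} v$, and so $\Re(\lambda) = v^*\!\left(\frac{1}{2}(S_{\beta,\gamma} + S_{\beta,\gamma}^T)\right)\! v$ is a Rayleigh quotient of the symmetric part. Since that symmetric part is Hermitian, this quotient is sandwiched between its smallest and largest eigenvalues, which are exactly the interval endpoints established in the previous step. This yields the bound on $\Re\lambda_i(S_{\beta,\gamma})$ and completes the proof.

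The one place demanding care is the Bendixson step: one must pass to complex eigenvectors and track conjugate transposes correctly, using that $\overline{v^* S_{\beta,\gamma} v} = v^* S_{\beta,\gamma}^T v$ for a real matrix in order to symmetrize the quadratic form. The rest is routine, provided the normalization constant relating $S_{\beta,\gamma}^{\sym}$ to $M^{\sym}$ is fixed consistently at the outset so that the two claims share the same interval endpoints.
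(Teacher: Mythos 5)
Your proposal is correct and follows essentially the same route as the paper: the paper's Lemma~\ref{lem:SymEigs} is exactly the Bendixson/Rayleigh-quotient bound you invoke (proved there via $2\Re\langle v, Sv\rangle = \langle v,(S+S^T)v\rangle$ for a unit complex eigenvector), and the remaining step --- computing that the skew part cancels so $S_{\beta,\gamma}^{\sym}$ is an affine image of $M^{\sym}$ whose eigenvalues are the correspondingly shifted and scaled eigenvalues of $M^{\sym}$ --- matches the paper's argument, including your (accurate) observation that the scalar normalization relating $S^{\sym}$ to $M^{\sym}$ must be fixed consistently.
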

A proof is provided in Appendix~\ref{sxn:proof_of_second_thm}. We infer that $\beta$  controls the width of the spectrum, while increasing $\gamma$ shifts the spectrum to the left along the real axis, thus enforcing eigenvalues with non-positive real parts. Choosing our hidden-to-hidden matrices to be $A_{\beta_A,\gamma_A}$ and $W_{\beta_W,\gamma_W}$ of the form (\ref{eq:engergy_preserving_scheme}) for different values of $\beta_A,\beta_W$ and $\gamma_A,\gamma_W$, we can ensure small spectra and satisfy the conditions of Theorem~\ref{thm:StabilityMain} as desired. 
Note, that different tuning parameters $\beta$ and $\gamma$ affect the stability behavior of the Lipschitz recurrent unit. 
This is illustrated in Figure~\ref{fig:stability_simulation}, where different values for $\beta$ and $\gamma$ are used to construct both $A_{\beta,\gamma}$ and $W_{\beta,\gamma}$ and applied to learning simple pendulum dynamics.

\begin{figure}[!t]
	\centering
	\begin{subfigure}[t]{0.23\textwidth}
		\centering
		\begin{overpic}[width=1\textwidth]{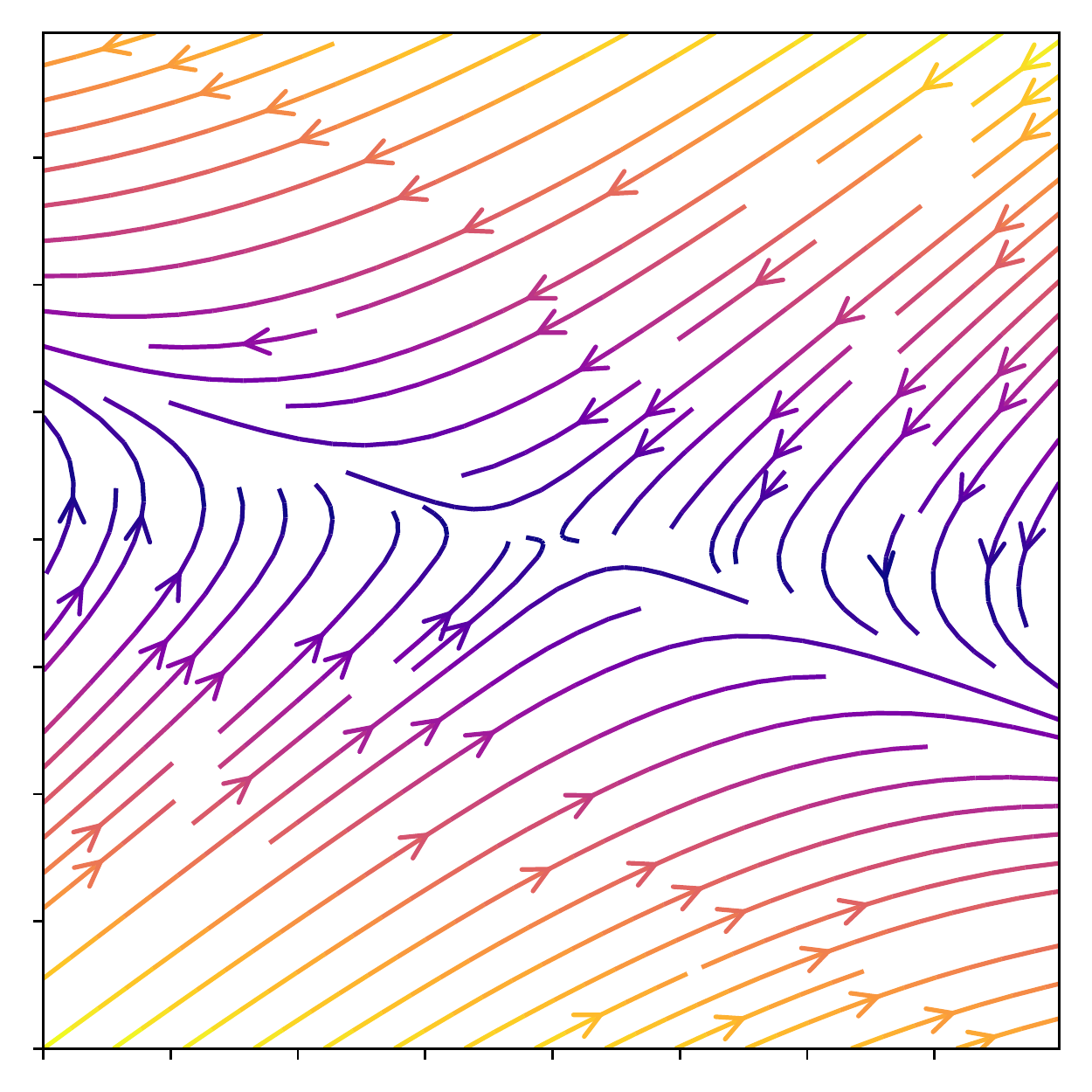}
		\end{overpic}\vspace{-0.1cm}		
		\caption{$\beta=0.5$, $\gamma=0$}
	\end{subfigure}
	~
	\begin{subfigure}[t]{0.23\textwidth}
		\centering
		\begin{overpic}[width=1\textwidth]{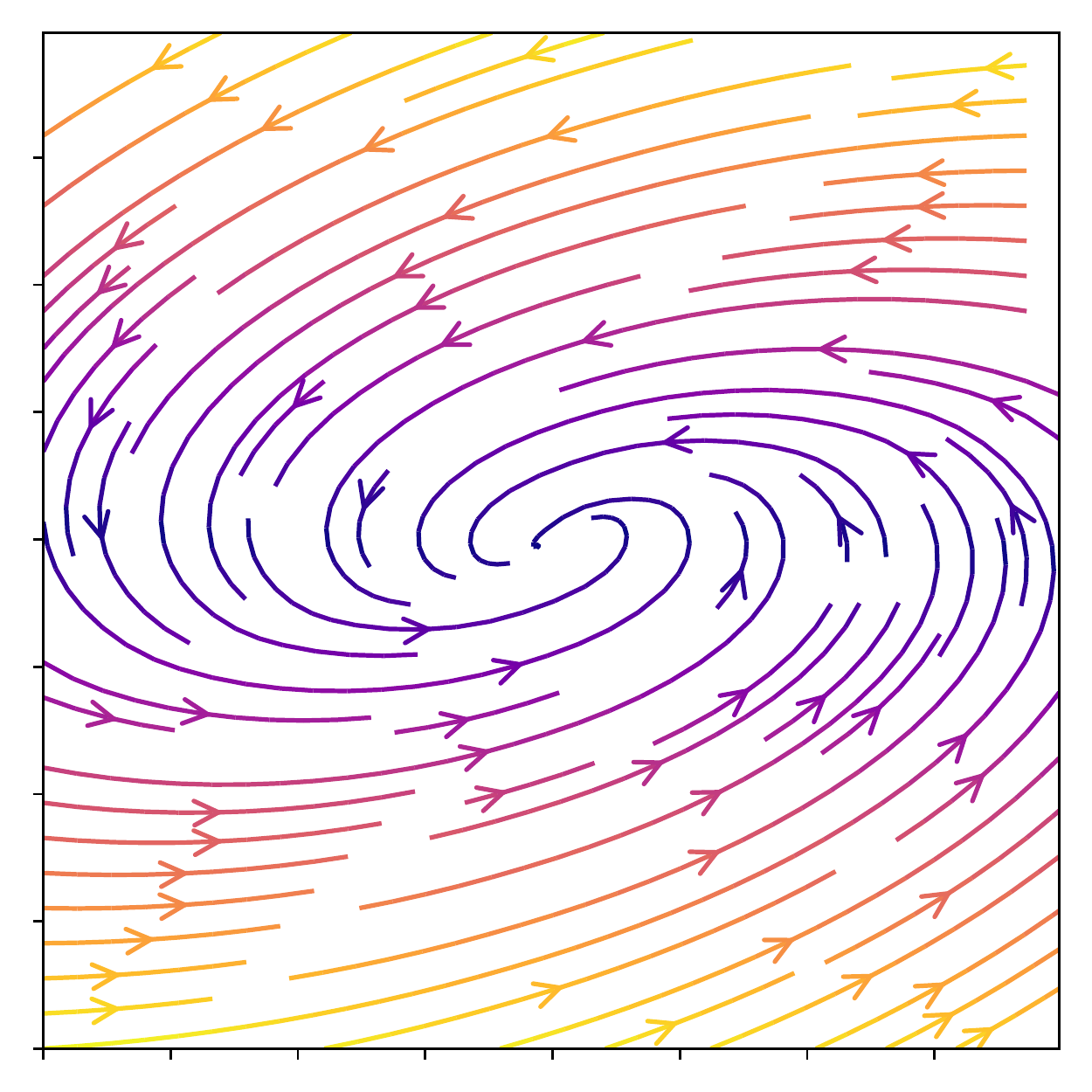} 
		\end{overpic}\vspace{-0.1cm}			
		\caption{$\beta=0.75$, $\gamma=0.01$}
	\end{subfigure}
	~
	\begin{subfigure}[t]{0.23\textwidth}
		\centering
		\begin{overpic}[width=1\textwidth]{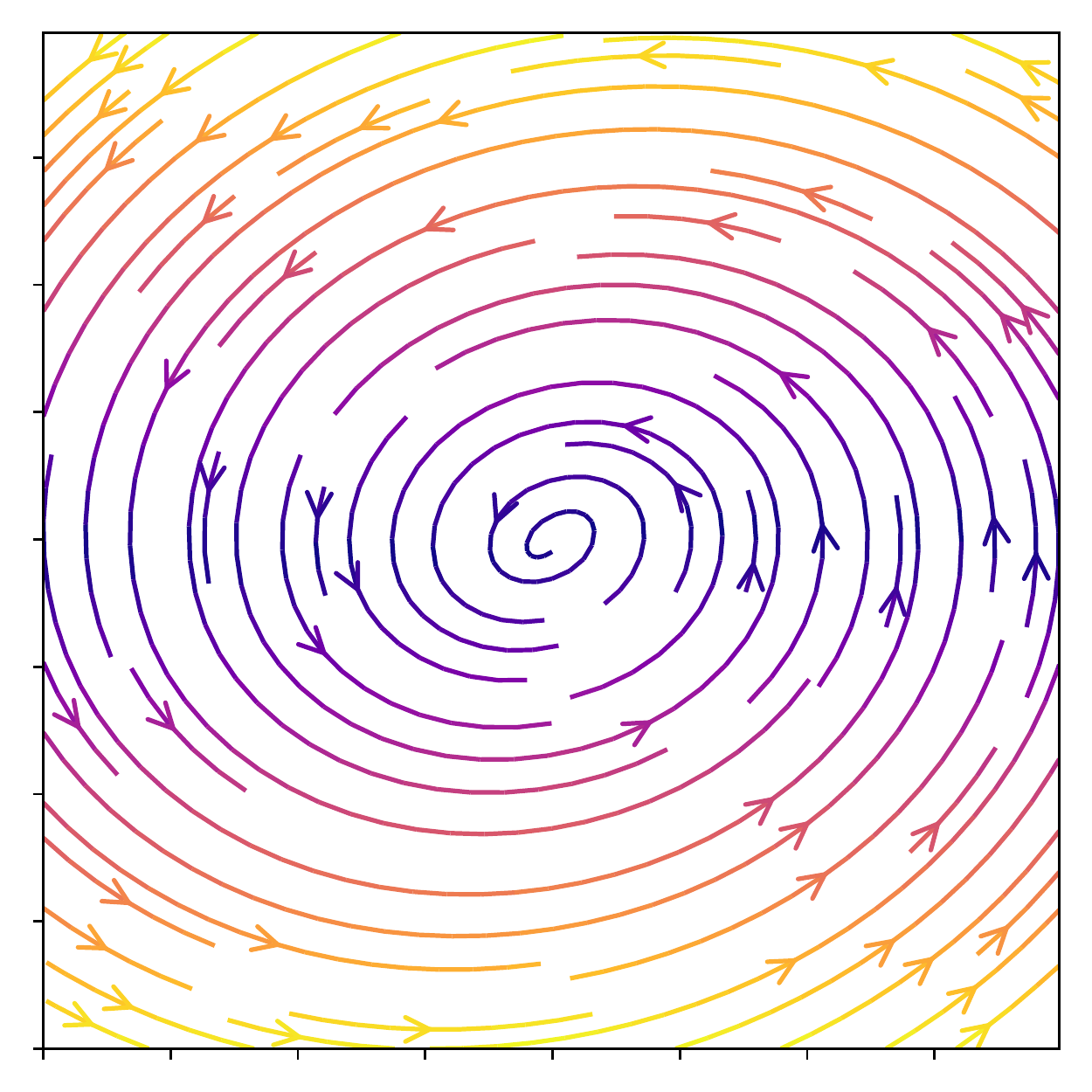} 
		\end{overpic}\vspace{-0.1cm}			
		\caption{$\beta=0.85$, $\gamma=0.01$}
	\end{subfigure}
	~
	\begin{subfigure}[t]{0.23\textwidth}
		\centering
		\begin{overpic}[width=1\textwidth]{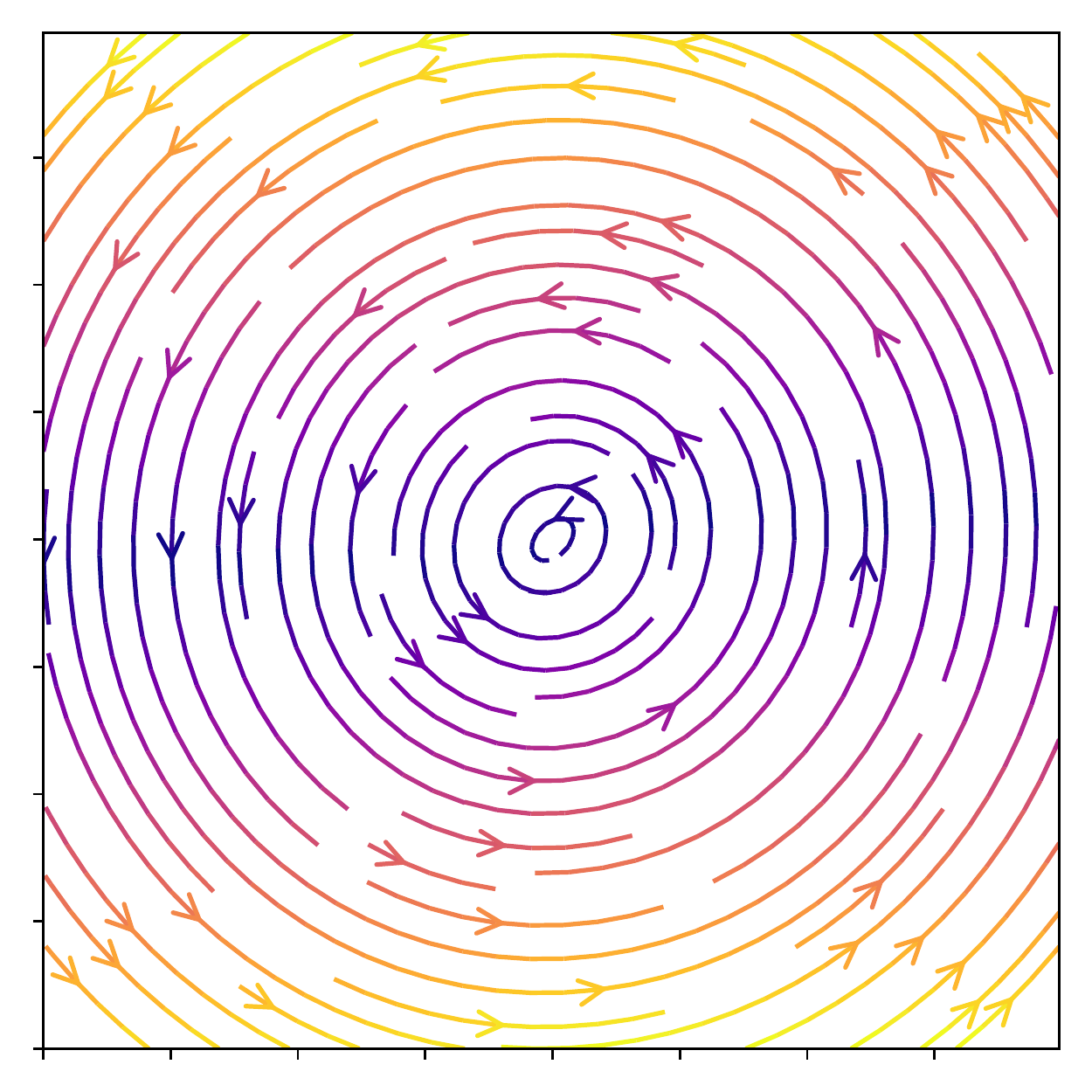} 
		\end{overpic}\vspace{-0.1cm}			
		\caption{$\beta=1.0$, $\gamma=0$}
	\end{subfigure}
	
	\caption{Vector fields of hidden states that are governed by Eq.~(\ref{eq:rnn_de}) trained for simple pendulum dynamics. In (a), an unstable model is shown. In (b) and (c), it can be seen that we yield models that are asymptotically stable,i.e., all trajectories are attracted by an equilibrium point. In contrast, in (d), a skew-symmetric parameterization leads to a stable model without an attracting equilibrium.}	
	
	\label{fig:stability_simulation}
\end{figure}

One cannot guarantee that model parameters will remain in the stability region during training. However, we can show that when $\beta$ is taken to be close to one, the eigenvalues of $A_{\beta,\gamma}^\sym$ and $W_{\beta,\gamma}^\sym$ (which dictate the stability of the RNN) change slowly during training. Let $\Delta_\delta F$ denote the change in a function $F$ depending on the parameters of the RNN (\ref{eq:rnn_de}) after one step of gradient descent with step size $\delta$ with respect to some loss $L(y)$. For a matrix $A$, we let $\lambda_k(A)$ denote the $k$-th singular value of $A$. We have the following lemma.
\begin{lemma}
\label{lem:Training}
As $\beta \to 1^-$, $\max_k |\Delta_\delta \lambda_k(A_{\beta,\gamma}^{\sym})| + \max_k |\Delta_\delta \lambda_k(W_{\beta,\gamma}^{\sym})| = \mathcal{O}(\delta(1-\beta)^2)$.
\end{lemma}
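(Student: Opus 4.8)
The plan is to reduce the whole statement to a single spectral-norm perturbation estimate and then to exhibit \emph{two} independent factors of $(1-\beta)$: one coming from the forward parameterization of $A^{\sym}_{\beta,\gamma}$ in terms of $M$, and a second, less obvious one coming from the backward (gradient) pass. First I would compute the symmetric part explicitly. Writing $A_{\beta,\gamma} = M + (1-2\beta)M^T - \gamma I$ (the symmetric and skew terms collapse to this form), a direct computation shows that the skew contributions cancel in $A + A^T$, leaving
\[ A_{\beta,\gamma}^{\sym} = (1-\beta)(M + M^T) - \gamma I . \]
Hence the quantities $\lambda_k(A_{\beta,\gamma}^{\sym})$ depend on the trainable matrix $M = M_A$ only through its symmetric part $M+M^T$, rescaled by $(1-\beta)$. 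Since $A_{\beta,\gamma}^{\sym}$ is symmetric, its singular values are the moduli of its eigenvalues, and by Weyl's inequality the map $M \mapsto \lambda_k(A_{\beta,\gamma}^{\sym})$ is $1$-Lipschitz in the operator norm, so $\max_k|\Delta_\delta \lambda_k(A_{\beta,\gamma}^{\sym})| \le \|\Delta_\delta A_{\beta,\gamma}^{\sym}\|_2$. Because $A_{\beta,\gamma}^{\sym}$ is \emph{linear} in $M$, this is exact (no higher-order terms) and equals $2(1-\beta)\|(\Delta_\delta M)^{\sym}\|_2$, reducing the lemma to controlling the symmetric part of the gradient step.

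Next I would analyze the step $\Delta_\delta M = -\delta\,\nabla_M L$. Let $G := \partial L/\partial A$ be the upstream gradient of the loss with respect to $A$ viewed as a free matrix (the quantity returned by backpropagation through the ODE). The chain rule applied to $A = M + (1-2\beta)M^T$ gives $\nabla_M L = G + (1-2\beta)G^T$. The crucial point is that while the full gradient is $\mathcal{O}(1)$, its \emph{symmetric part} is not: the coefficients of $G$ and $G^T$ are both $1+(1-2\beta) = 2(1-\beta)$, so
\[ (\nabla_M L)^{\sym} = \tfrac12\big(G + (1-2\beta)G^T + G^T + (1-2\beta)G\big) = (1-\beta)(G + G^T). \]
Substituting $(\Delta_\delta M)^{\sym} = -\delta(1-\beta)(G+G^T)$ into the previous bound yields $\|\Delta_\delta A_{\beta,\gamma}^{\sym}\|_2 = 2\delta(1-\beta)^2\|G+G^T\|_2$. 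The two factors of $(1-\beta)$ — one from the forward coefficient on $M^{\sym}$, one from the cancellation of the skew part of $G$ in the backward pass — are exactly what upgrades the naive $\mathcal{O}(\delta(1-\beta))$ rate to $\mathcal{O}(\delta(1-\beta)^2)$.

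Finally I would argue that the implied constant is uniform as $\beta \to 1^-$. For fixed $M,\gamma$ and fixed data, $A_{\beta,\gamma} = M + (1-2\beta)M^T - \gamma I$ varies continuously (tending to $M - M^T - \gamma I$), so the backpropagated gradient $G = \partial L/\partial A$ stays bounded, giving $\|G+G^T\|_2 = \mathcal{O}(1)$ and hence $\max_k|\Delta_\delta \lambda_k(A_{\beta,\gamma}^{\sym})| = \mathcal{O}(\delta(1-\beta)^2)$. The identical argument with $M_W$ and $\partial L/\partial W$ handles $W_{\beta,\gamma}^{\sym}$; note that $A^{\sym}$ depends only on $M_A$ and $W^{\sym}$ only on $M_W$, so the two gradient contributions do not interfere, and summing the two bounds gives the claim. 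I expect the main obstacle to be precisely the second factor of $(1-\beta)$: it is tempting to stop at $\|\Delta_\delta A^{\sym}\|_2 \le (1-\beta)\|\Delta_\delta M + \Delta_\delta M^T\|_2 = \mathcal{O}(\delta(1-\beta))$, and the key realization is that the symmetric part of $\nabla_M L$ itself carries an extra $(1-\beta)$. A secondary technical point is the boundedness of $G$ as $\beta \to 1^-$, which needs only continuity/differentiability of the loss in $A$.
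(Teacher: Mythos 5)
Your proof is correct and takes essentially the same route as the paper's: bound $\max_k|\Delta_\delta \lambda_k(A_{\beta,\gamma}^{\sym})|$ by $\|\Delta_\delta A_{\beta,\gamma}^{\sym}\|$ via Weyl's inequality, pick up one factor of $(1-\beta)$ from $A_{\beta,\gamma}^{\sym}=(1-\beta)(M+M^T)-\gamma I$, and a second from the observation that the symmetric part of $\nabla_M L$ is itself $\mathcal{O}(1-\beta)$. The paper extracts that second factor elementwise (via $\partial A_{\beta,\gamma}/\partial M_A^{ij}+\partial A_{\beta,\gamma}/\partial M_A^{ji}=2(1-\beta)(\cdots)$) and justifies boundedness of the upstream gradient with explicit Gronwall estimates on $h(T)$ and its parameter sensitivities where you appeal to continuity; your identity $(\nabla_M L)^{\sym}=(1-\beta)(G+G^T)$ is the same computation packaged in cleaner matrix form.
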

Therefore, provided both the initial and optimal parameters lie within the stability region, the model parameters will remain in the stability region for longer periods of time with high probability as $\beta \to 1$. Further empirical evidence of parameters often remaining in the stability region during training are provided alongside the proof of Lemma \ref{lem:Training} in the Appendix (see Figure \ref{fig:eigs}).

\section{Training Continuous-time Lipschitz Recurrent Units}
\label{sxn:lipshitz_recurrent_unit}

ODEs such as Eq.~(\ref{eq:rnn_de}) can be approximately solved by employing numerical integrators. In scientific computing, numerical integration is a well studied field that provides well understood techniques~\citep{LeVeque}. Recent literature has also introduced new approaches which are designed with neural network frameworks in mind~\citep{chen2018neural}. 

To learn the weights $A, W, U$ and $b$, we discretize the continuous model using one step of a numerical integrator between sequence entries. 
In what follows, a subscript $t$ denotes discrete time indices, $\Delta t$ represents the time difference between a pair of consecutive data points. 
Letting $f(h, t) = A h + \tanh(W h + U x(s) + b)$ so that $\dot{h}(t) = f(h,t)$, the exact and approximate solutions for $h_{t+1}$ given $h_t$ are given by
\begin{align}
	h_{t+1} &= h_t + \int_t^{t+\Delta t} f(h(s), s) \mathrm{d} s \coloneqq h_t + \int_t^{t+\Delta t} Ah(s) + \tanh(Wh(s) + Ux(s) + b)\,\mathrm{d}s \\
	&\approx h_t + \Delta t \cdot \mathtt{scheme} \left[f,\, h_t,\, \Delta t\right] \ ,
\end{align}
where $\mathtt{scheme}$ represents one step of a numerical integration scheme whose application yields an approximate solution for $\frac{1}{\Delta t} \int_t^{t+\Delta t} f(h(s), s) \mathrm{d} s$ given $h_t$ using one or more evaluations of $f$.

We consider both the explicit (forward) Euler scheme, 
\begin{equation} \label{eq:Lipschitz_unit}
	h_{t+1} = h_{t} + \Delta t \cdot Ah_{t} + \Delta t \cdot \tanh(z_t),
\end{equation}
as well as the midpoint method which is a two-stage explicit Runge-Kutta scheme (RK2),
\begin{align} \label{eq:rk2}
	h_{t+1} = h_{t} + \Delta t \cdot A\tilde{h} + \Delta t \cdot \tanh(W \tilde{h} + Ux_{t} + b), 
\end{align}
where $\tilde{h} = h_{t} + \Delta t/2 \cdot Ah_{t} + \Delta t/2 \cdot \tanh(z_t)$ is an intermediate hidden state.
The RK2 scheme can potentially improve the performance since the scheme is more accurate, however, this scheme also requires twice as many function evaluations as compared to the forward Euler scheme. 
Given a $\beta$ and $\gamma$ that yields a globally exponentially stable continuous model, $\Delta t$ can always be chosen so that the model remains in the stability region of forward Euler and RK2~\citep{LeVeque}.

\section{Empirical Evaluation}\label{sec:experiments}

In  this  section,  we  evaluate the  performance  of the Lipschitz RNN and compare it to other state-of-the-art methods.
The model is applied to ordered and permuted pixel-by-pixel MNIST classification, as well as to audio data using the TIMIT dataset.
We show the sensitivity with respect to to random initialization in Appendix~\ref{sec:app_results}. 
Appendix~\ref{sec:app_results} also contains additional results for: pixel-by-pixel CIFAR-10 and a noise-padded version of CIFAR-10; as well as for character level and word level prediction using the Penn Tree Bank (PTB) dataset.
All of these tasks require that the recurrent unit learns long-term dependencies: that is, the hidden-to-hidden matrices need to have sufficient memory to remember information from far in the past.

\begin{table}[!b]
	\caption{Evaluation accuracy on ordered and permuted pixel-by-pixel MNIST.}
	\label{tab:mnist-table}
	\centering
	\scalebox{0.92}{
		\begin{tabular}{l c c c c c c}
			\toprule
			Name                  &  ordered & permuted  & N & \# params\\
			\midrule 
			
			LSTM baseline by~\citep{arjovsky2016unitary} & 97.3\% & 92.7\% &128 & $\approx$68K\\        

			MomentumLSTM~\citep{nguyen2020momentumrnn} & 99.1\% & 94.7\% & 256 & $\approx$270K\\ 			
			
			Unitary RNN~\citep{arjovsky2016unitary} & 95.1\% & 91.4\% &512 & $\approx$9K\\  		
			
			Full Capacity Unitary RNN~\citep{wisdom2016full} & 96.9\% & 94.1\% &512 & $\approx$270K\\  
			
			Soft orth. RNN~\citep{vorontsov2017orthogonality} & 94.1\% & 91.4\% &128 & $\approx$18K\\

			Kronecker RNN~\citep{jose2018kronecker} & 96.4\% & 94.5\% & 512 & $\approx$11K\\

			Antisymmteric RNN~\citep{chang2018antisymmetricrnn}  & 98.0\% & 95.8\% &128 & $\approx$10K\\
			
			Incremental RNN~\citep{Kag2020RNNs} & 98.1\% & 95.6\% & 128 & $\approx$4K/8K\\

			Exponential RNN~\citep{lezcano2019cheap} & 98.4\% & 96.2\% & 360 & $\approx$69K\\
			
			Sequential NAIS-Net~\citep{NEURIPS2018_7bd28f15}     &    94.3\%     &  90.8\%         & 128              &  $\approx$18K  \\      
			
			\midrule
			Lipschitz RNN using Euler (ours)  & 99.0\% & 94.2\% & 64 & $\approx$9K\\
			Lipschitz RNN using RK2 (ours)  &  99.1\% & 94.2\% & 64 & $\approx$9K\\	
			\midrule
			Lipschitz RNN using Euler (ours)  & \textbf{99.4\%} & \textbf{96.3\%} & 128 & $\approx$34K\\
			Lipschitz RNN using RK2 (ours)  & 99.3\% & 96.2\% & 128 & $\approx$34K\\			
			\bottomrule
	\end{tabular}}
\end{table}

\subsection{Ordered and Permuted Pixel-by-Pixel MNIST}\label{sec:mnist}

The pixel-by-pixel MNIST task tests long range dependency by
sequentially presenting $784$ pixels to the recurrent unit, \ie, the RNN processes one pixel at a time~\citep{le2015simple}. 
At the end of the sequence, the learned hidden state is used to predict the class membership probability of the input image. This task requires that the RNN has a sufficient long-term memory in order to discriminate between different classes. A more challenging variation to this task is to operate on a fixed random permutation of the input sequence.

Table~\ref{tab:mnist-table} provides a summary of our results.
The Lipschitz RNN, with hidden dimension of $N=128$ and trained with the forward Euler and RK2 scheme, achieves $99.4\%$ and $99.3\%$ accuracy on the ordered pixel-by-pixel MNIST task. 
For the permuted task, the model trained with forward Euler achieves $96.3\%$ accuracy, whereas the model trained with RK2 achieves $96.2\%$ accuracy.
Hence, our Lipschitz recurrent unit outperforms state-of-the-art RNNs on both tasks and is competitive even when a hidden dimension of $N=64$ is used, however, it can be seen that a larger unit with more capacity is advantageous for the permuted task.
Our results show that we significantly outperform the Antisymmetric RNN~\citep{chang2018antisymmetricrnn} on the ordered tasks, while using fewer weights. That shows that the antisymmetric weight paramterization is limiting the expressivity of the recurrent unit.
The exponential RNN is the next most competitive model, yet this model requires a larger hidden-to-hidden unit to perform well on the two considered tasks. 

\begin{table}[!t]
	\caption{Evaluation on TIMIT using 1 layer models.  The mean squared error (MSE) is computes the distance between the predicted and actual log-magnitudes of each predicted frame in the sequence. }
	\label{tab:timit}
	\centering
	\scalebox{0.91}{
		\begin{tabular}{l c c c c c c}
			\toprule
			Name                  & val. MSE & test MSE  & N & \# params\\
			\midrule 
			
			LSTM~\citep{helfrich2018orthogonal} & 13.66 & 12.62 & 158 & $\approx$200K\\ 
			
			LSTM~\citep{nguyen2020momentumrnn}  & 9.33 & 9.37 & 158 & $\approx$200K\\ 
			
			MomentumLSTM~\citep{nguyen2020momentumrnn} & 5.86 & 5.87 & 158 & $\approx$200K\\ 
			
			SRLSTM~\citep{nguyen2020momentumrnn} & 5.81 & 5.83 & 158 & $\approx$200K\\

			Full-capacity Unitary RNN~\citep{wisdom2016full} & 14.41 & 14.45 & 256 & $\approx$200K\\  		
			
			Cayley RNN~\citep{helfrich2018orthogonal} & 7.97 & 7.36 & 425 & $\approx$200K\\

			Exponential RNN~\citep{lezcano2019cheap}  & 5.52 & 5.48 & 425 & $\approx$200K\\  									 						
			\midrule
			Lipschitz RNN using Euler (ours) &  2.95 &  2.82   & 256 & $\approx$198K\\

			Lipschitz RNN using RK2 (ours) & \textbf{2.86} & \textbf{2.76} & 256 & $\approx$198K\\			
			\bottomrule
	\end{tabular}}
\end{table}

\subsection{TIMIT}

Next, we consider the TIMIT dataset~\citep{Garofolo} to study the capabilities of the Lipschitz RNN for speech prediction using audio data.
For our experiments, we used the publicly available implementation of this task by~\citet{lezcano2019cheap}.
This implementation applies the preprocessing steps suggested by \citet{wisdom2016full}: (i) downsample each audio sequence to 8kHz; (ii) process the downsampled sequences with a short-time Fourier transform using a Hann window of 256 samples and a window hop of 128 samples; and (iii) normalize the log-magnitude of the Fourier amplitudes.
We obtain a set of frames that each have 129 complex-valued Fourier amplitudes and the task is to predict the log-magnitude of future frames. 
To compare our results with those of other models, we used the common train / validation / test split: 3690 utterances from 462 speakers for training, 192 utterances for validation, and 400 utterances for testing.  

Table~\ref{tab:timit} lists the results for the Lipschitz recurrent unit as well as for several benchmark models. It can be seen that the Lipschitz RNN outperforms other state-of-the-art models for a fixed number of parameters ($\approx 200$K). In particular, LSTMs do not perform well on this task, however, the recently proposed momentum based LSTMs~\citep{nguyen2020momentumrnn} have improvemed performance.
Interestingly, the RK2 scheme leads to a better performance since this scheme provides more accurate approximations for the intermediate states.

\section{Robustness with Respect to Perturbations}\label{sec:sensitivity}

An important consideration beyond accuracy is robustness with respect to input and parameter perturbations. We consider a Hessian-based analysis and noise-response analysis of different continuous-time recurrent units and train the models on MNIST. Here, we reshape each MNIST thumbnail into sequences of length $98$ so that each input has dimension $x\in \mathbb{R}^8$.
We consider this simpler problem so that all models obtain roughly the same training loss. Here we use stochastic gradient decent (SGD) with momentum to train the models.

Eigenanalysis of the Hessian provides a tool for studying various aspects of neural networks~\citep{hochreiter1997flat,sagun2017empirical,ghorbani2019investigation}.
Here, we study the Hessian $H$ spectrum with respect to the model parameters of the recurrent unit using PyHessian~\citep{YGKM19_pyhessian_TR}. 
The Hessian provides us with insights about the curvature of the loss function $\mathcal{L}$.  
This is because the Hessian is defined as the derivatives of the gradients, and thus the Hessian eigenvalues describe the change in the gradient of $\mathcal{L}$ as we take an infinitesimal step into a given direction. 
The eigenvectors span the (local) surface of the loss function at a given point, and the corresponding eigenvalue determines the curvature in the direction of the eigenvectors. 
This means that larger eigenvalues indicate a larger curvature, \ie, greater sensitivity, and the sign of the eigenvalues determines whether the curvature will be positive or negative.

\begin{table}[!b]
	\caption{Summary of Hessian-based robustness metrics and resilience to adversarial attacks.
		\label{tab:sensitifity}}
	\centering
	\scalebox{0.91}{
		\begin{tabular}{lcccccccccc} \toprule
			Model                  & PGD  &DF$_2$ & DF$_\infty$ &  $\lambda_{\max}(H)$ & $\mathrm{tr}(H)$ & $\kappa(H)$ \\
			\midrule
			
			Neural ODE RNN            & 88.5\%  & 69.6\%   & 44.5\%  & 0.30 & 4.7 & 37.6\\
			Antisymmetric  RNN       & 84.7\%  & 83.4\%   & 44.3\%  & 0.24 & 4.8 & 35.5\\
			Lipschitz RNN (ours)    & \textbf{93.0}\%  & \textbf{89.2}\%   & \textbf{54.1}\%   &  \textbf{0.14} & \textbf{3.1} & \textbf{23.2}\\
			\bottomrule 
	\end{tabular}}
\end{table}

\begin{figure}[!b]
	\centering
	\begin{subfigure}[t]{0.45\textwidth}
		\centering
		\begin{overpic}[width=1\textwidth]{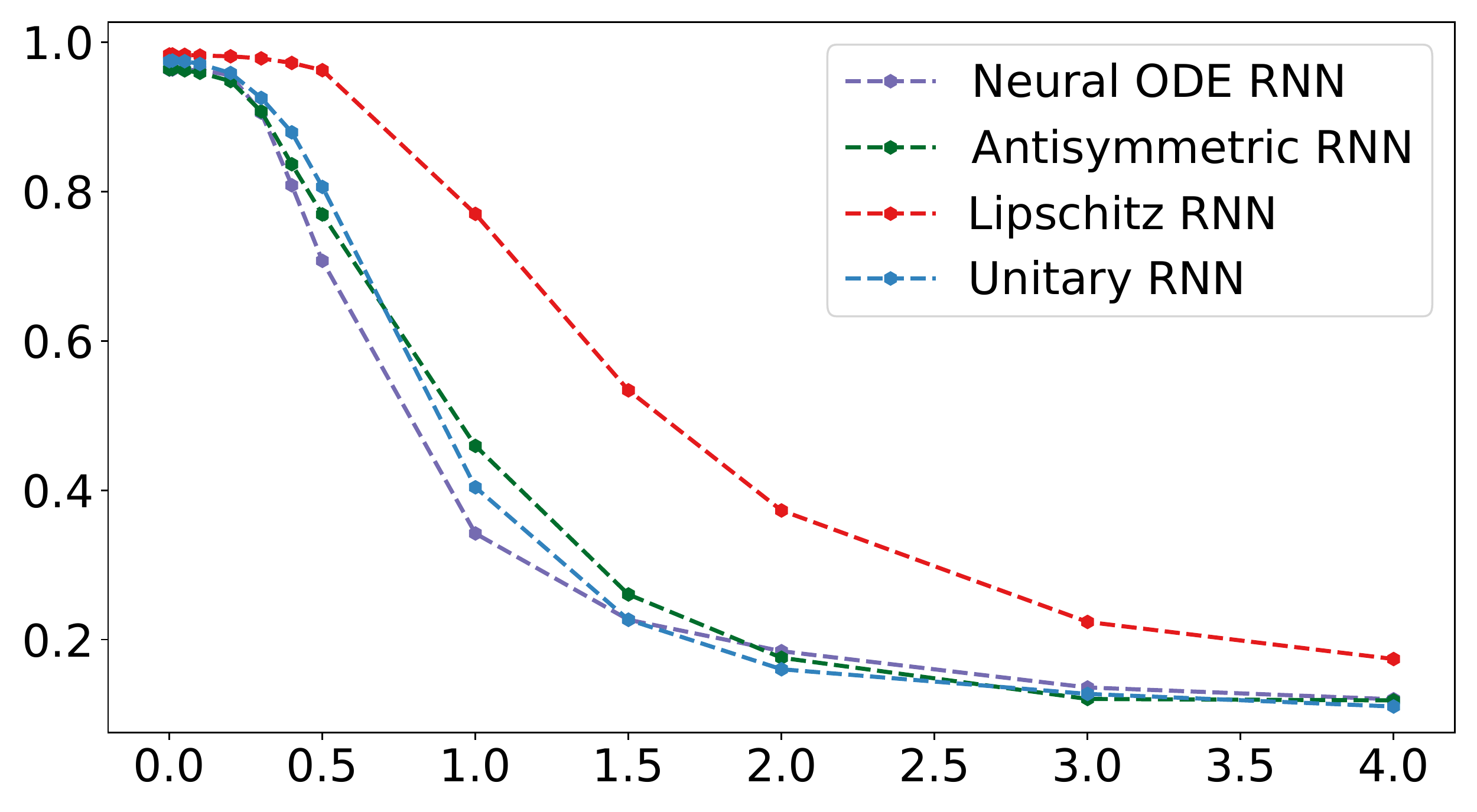}
			\put(-6,15){\rotatebox{90}{\footnotesize test accuracy}}			
			\put(42,-3){\footnotesize {amount of noise}}  	
		\end{overpic}\vspace{+0.2cm}		
		\caption{White noise perturbations.}\label{fig:perturb_a}
	\end{subfigure}
	~
	\begin{subfigure}[t]{0.45\textwidth}
		\centering
		\begin{overpic}[width=1\textwidth]{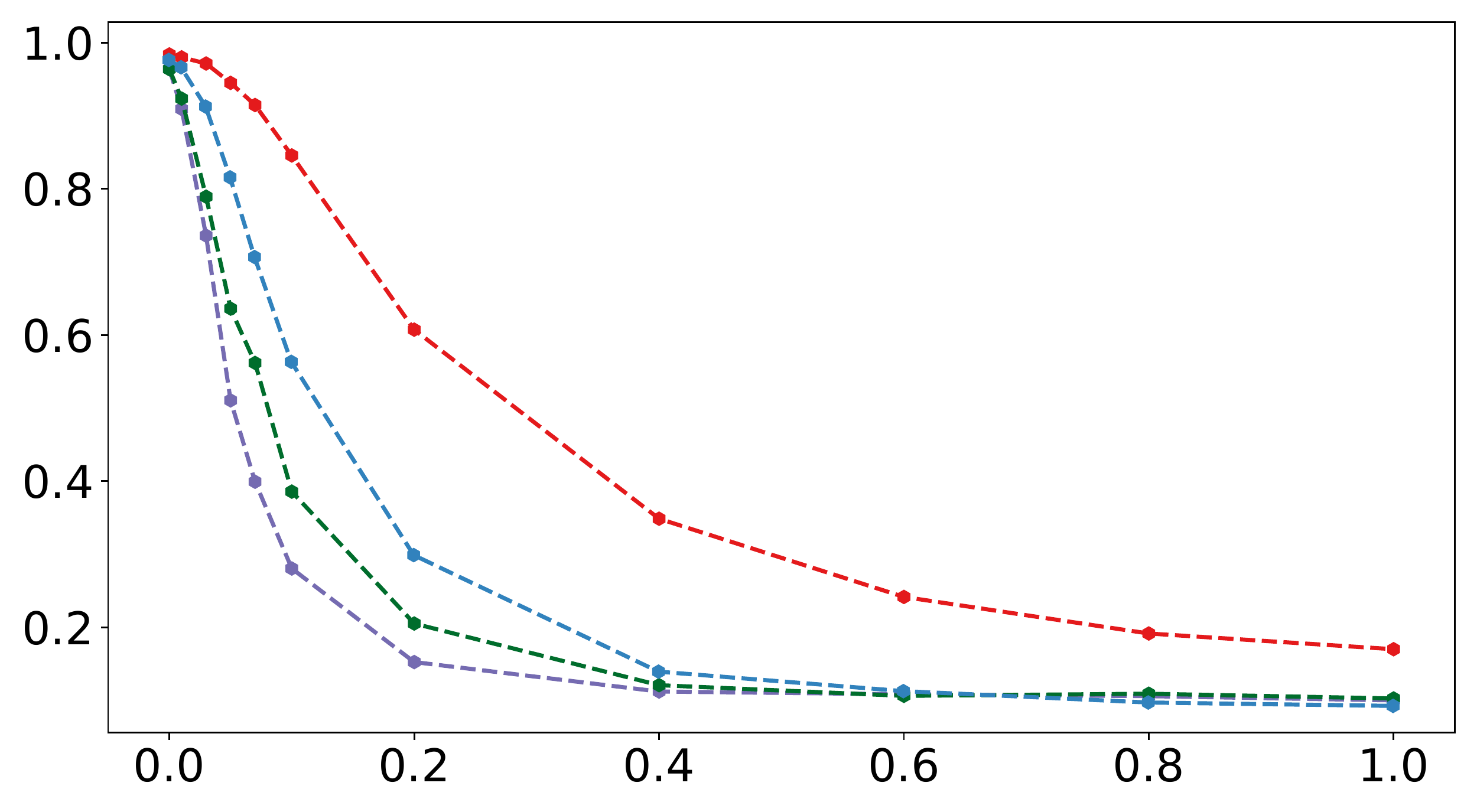} 
			\put(42,-3){\footnotesize {amount of noise}}  		
		\end{overpic}\vspace{+0.2cm}			
		\caption{Salt and pepper perturbations.}\label{fig:perturb_b}
	\end{subfigure}	
	\caption{Sensitivity with respect to different input perturbations.}
	\label{fig:perturb}
\end{figure}

To demonstrate the advantage of the additional linear term and our weight parameterization, we compare the Lipschitz RNN to two other continuous-time recurrent units. First, we consider a simple neural ODE RNN~\citep{rubanova2019latent} that takes the form
\begin{equation}
    \label{eq:odernn}
    \dot{h} = \tanh(Wh + Ux + b),\qquad \qquad y = Dh,
\end{equation}
where $W$ is a simple hidden-to-hidden matrix. As a second model we consider the antisymmetric RNN~\citep{chang2018antisymmetricrnn}, that takes the same form as (\ref{eq:odernn}), but uses a skew-symmetric scheme to parameterize the hidden-to-hidden matrix as 
$W := (M - M^T) - \gamma I$,
where $M$ is a trainable weight matrix and $\gamma$ is a tunable parameter. 

Table~\ref{tab:sensitifity} reports the largest eigenvalue $\lambda_{\max}(H)$ and the trace of the Hessian $\mathrm{tr}(H)$.
The largest eigenvalue being smaller indicates that our Lipschitz RNN found a flatter minimum, as compared to the simple neural ODE and Antisymmetric RNN.
It is known that such flat minima can be perturbed without significantly changing the loss value~\citep{hochreiter1997flat}. 
Table~\ref{tab:sensitifity} also reports the condition number $\kappa(H) := \frac{\lambda_{\max}(H)}{\lambda_{\min}(H)}$ of the Hessian.
The condition number $\kappa(H)$ provides a measure for the spread of the eigenvalues of the Hessian.
It is known that first-order methods can slow down in situations where $\kappa$ is large~\citep{bottou2008tradeoffs}. 
The condition number and trace of our Lipshitz RNN being smaller also indicates improved robustness properties.

Next, we study the sensitivity of the response $y_T$ at time $T$ in terms of the test accuracy with respect to a sequence of perturbed inputs $\{\tilde{x}_1,\dots,\tilde{x}_T\} \in \mathbb{R}^{8}$. 
We consider three different perturbations. 
The results for the artificially constructed perturbations are presented in Table~\ref{tab:sensitifity}, showing that the Lipschitz RNN is more resilient to adversarial perturbation.
Here, we have considered the projected gradient decent (PGD)~\citep{goodfellow2014explaining} method with $l_\infty$, and the DeepFool method~\citep{moosavi2016deepfool} with $l_2$ and $l_\infty$ norm ball perturbations.
We construct the adversarial examples with full access to the models, using $7$ iterations. The step size for PGD is set to $0.01$.

Further, Figure~\ref{fig:perturb} shows the results for white noise and salt and pepper noise. It can be seen that the Lipschitz unit is less sensitive to input perturbations, as compared to the simple neural ODE RNN, and the antisymmetric RNN. In addition, we also show the results for an unitary RNN here.

\subsection{Ablation Study}\label{sec:ablation}

The performance of the Lipschitz recurrent unit is due to two main innovations: (i) the additional linear term; and (ii) the scheme for constructing the hidden-to-hidden matrices $A$ and $W$ in Eq.~(\ref{eq:engergy_preserving_scheme}).
Thus, we investigate the effect of both innovations, while keeping all other conditions fixed. More concretely, we consider the following ablation recurrent unit
\begin{equation}\
	h_{t+1} = h_{t} + \alpha \cdot \epsilon \cdot Ah_{t} + \epsilon \cdot \tanh(z_t), \quad \text{with} \quad z_t = Wh_{t} + Ux_t + b,
\end{equation}
where $\alpha$ controls the effect of the linear hidden unit. 
Both $A$ and $W$ depend on the parameters $\beta$, $\gamma$.

Figure~\ref{fig:ablation_a} studies the effect of the linear hidden unit, with $\beta=0.65$ for the ordered task and $\beta=0.8$ for the permuted task. In both cases we use $\gamma=0.001$. It can be seen that the test accuracies of both the ordered and permuted pixel-by-pixel MNIST tasks clearly depend on the linear hidden unit. For $\alpha=0$, our models reduces to simple neural ODE recurrent units (Eq.~(\ref{eq:odernn})). The recurrent unit degenerates for $\alpha>1.6$, since the external input is superimposed by the hidden state. 
Figure~\ref{fig:ablation_b} studies the effect of the hidden-to-hidden matrices with respect to $\beta$. 
It can be seen that $\beta=\{0.65,0.70\}$ achieves peak performance for the ordered task, and $\beta=\{0.8,0.85\}$ does so for the permuted task. 
Note that $\beta=1.0$ recovers an skew-symmetric hidden-to-hidden matrix. 

\begin{figure}[!t]
	\centering
	\begin{subfigure}[t]{0.45\textwidth}
		\centering
		\begin{overpic}[width=1\textwidth]{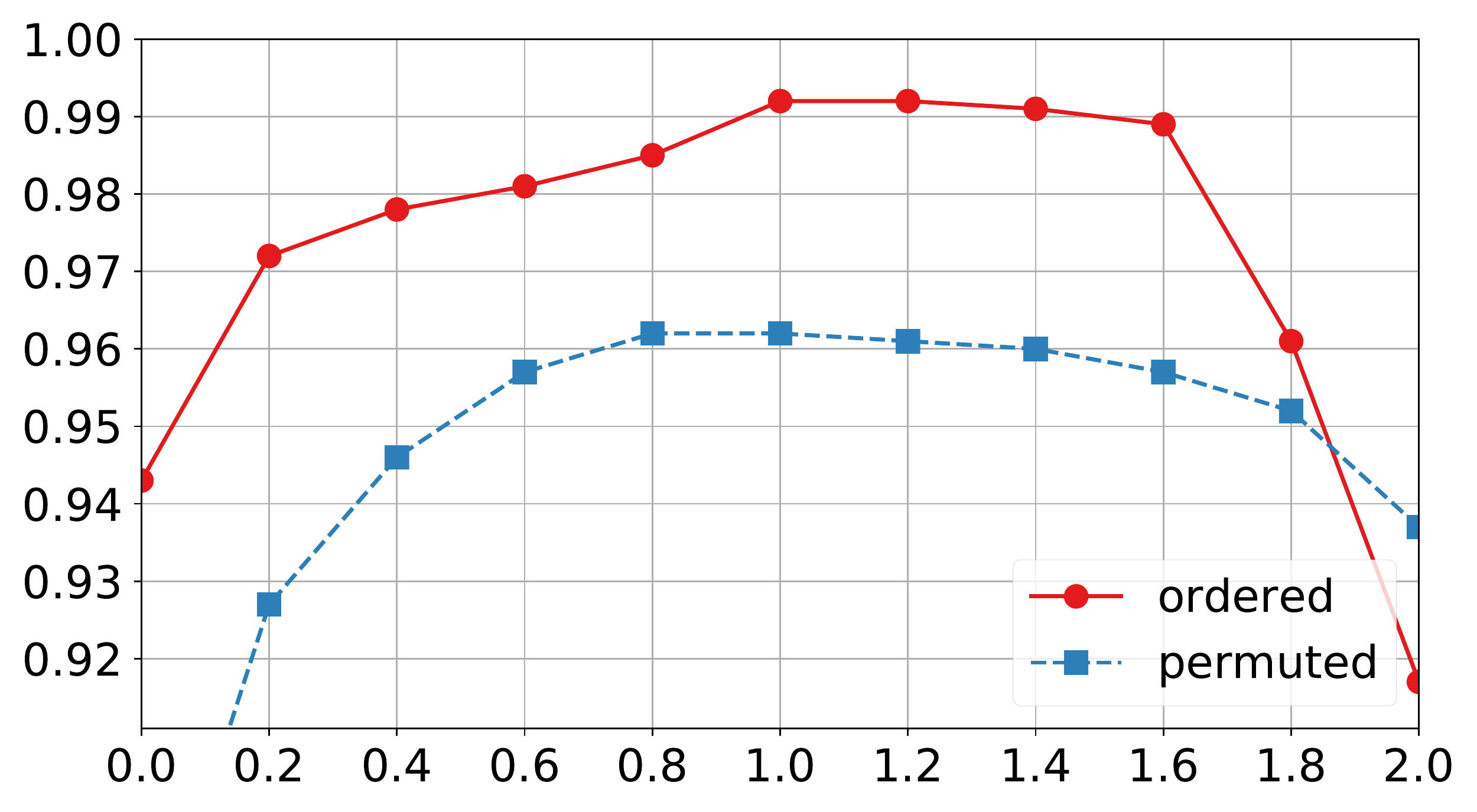}
			\put(-6,15){\rotatebox{90}{\footnotesize test accuracy}}			
			\put(31,-3){\footnotesize {ablation parameter, $\alpha$}}  	
		\end{overpic}\vspace{+0.3cm}		
		\caption{Effect of the linear term.}\label{fig:ablation_a}
	\end{subfigure}
	~
	\begin{subfigure}[t]{0.45\textwidth}
		\centering
		\begin{overpic}[width=1\textwidth]{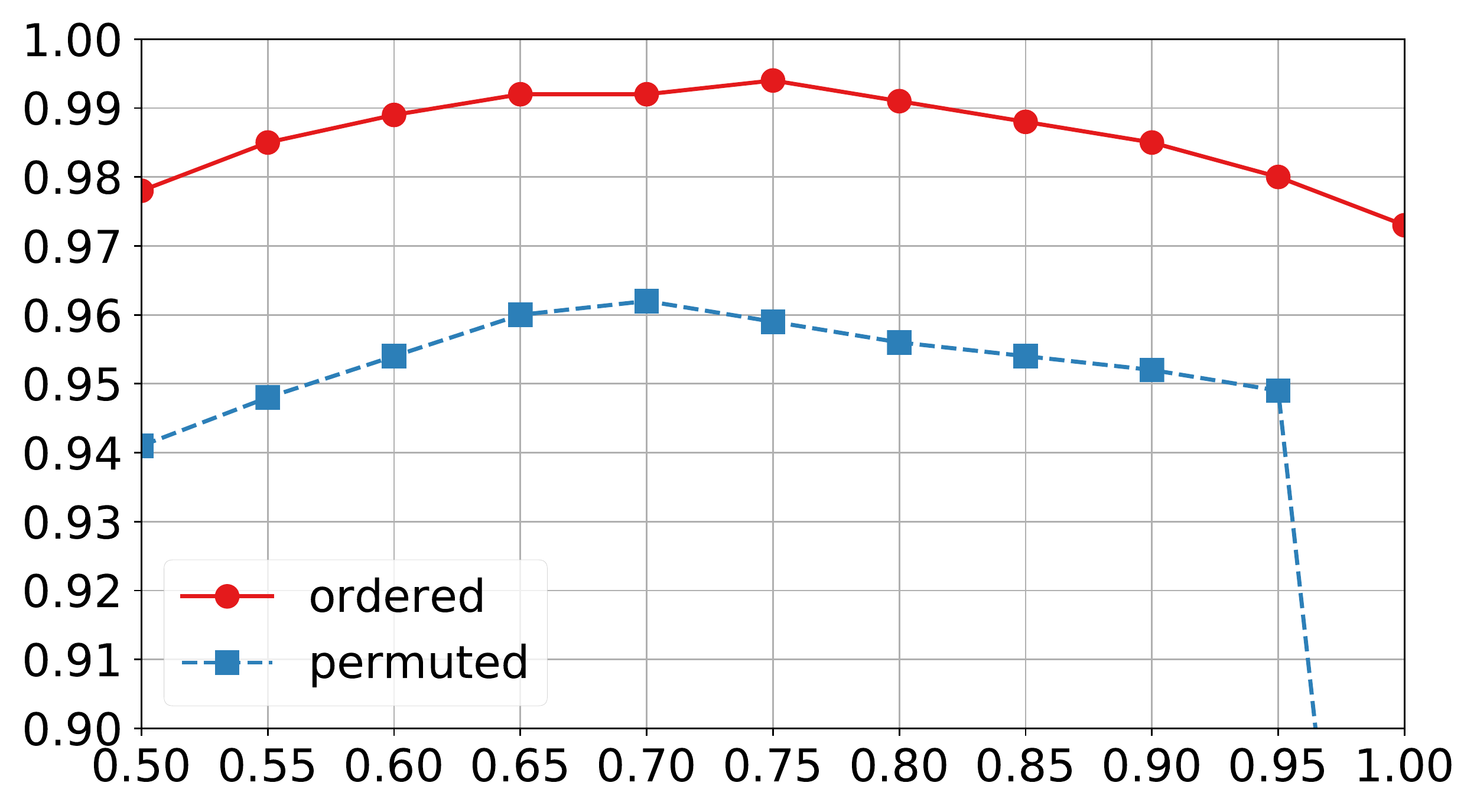} 
			\put(31,-3){\footnotesize {ablation parameter, $\beta$}}  		
		\end{overpic}\vspace{+0.3cm}			
		\caption{Effect of Eq.~(\ref{eq:engergy_preserving_scheme}).}\label{fig:ablation_b}
	\end{subfigure}
	\caption{The ablation study examines the effect of the linear term $Ah$ (in (a)) and the importance of the Skew-Symmetric Decomposition for constructing the hidden-to-hidden matrices (in (b)).} 
	\label{fig:ablation}
\end{figure}

\section{Conclusion}

Viewing RNNs as continuous-time dynamical systems with input, we have proposed a new Lipschitz recurrent unit that excels on a range of benchmark tasks. 
The special structure of the recurrent unit allows us to obtain guarantees of global exponential stability using control theoretical arguments. 
In turn, the insights from this analysis motivated the symmetric-skew decomposition scheme for constructing hidden-to-hidden matrices, which mitigates the vanishing and exploding gradients problem.
Due to the nice stability properties of the Lipschitz recurrent unit, we also obtain a model that is more robust with respect to input and parameter perturbations as compared to other continuous-time units. 
This behavior is also reflected by the Hessian analysis of the model. 
We expect that the improved robustness will make Lipschitz RNNs more reliable for sensitive applications.
The theoretical results for our symmetric-skew decomposition of parameterizing hidden-to-hidden matrices also directly extend to the convolutional setting. Future work will explore this extension and study the potential advantages of these more parsimonious hidden-to-hidden matrices in combination with our parameterization in practice.
Research code is shared via \href{https://github.com/erichson/LipschitzRNN}{github.com/erichson/LipschitzRNN}.

\clearpage
\subsubsection*{Acknowledgments}
We would like to thank Ed H. Chi for fruitful discussions about physics-informed machine learning and the Antisymmetric RNN.
We are grateful to the generous support from Amazon AWS and Google Cloud.
NBE and MWM would like to acknowledge IARPA (contract W911NF20C0035), NSF, ONR and CLTC for providing partial support of this work. Our conclusions do not necessarily reflect the position or the policy of our sponsors, and no official endorsement should be inferred.

\bibliography{rnn}
\bibliographystyle{iclr2021_conference}

\clearpage
\appendix

\section{Proofs}

\subsection{Proofs of Theorem \ref{thm:StabilityMain} and Lemma \ref{lem:CGH}}
\label{sxn:proof_of_first_thm}

There are numerous ways that one can analyze the global stability of (\ref{eq:RNNODEMain}) through the related model (\ref{eq:CGH}), many of which are discussed in \cite{zhang2014comprehensive}. Instead, here we shall conduct a direct approach and avoid appealing to diagonalization in order to obtain cleaner conditions, and a more straightforward proof that readily applies in the time-inhomogeneous setting.

Our method of choice relies on Lyapunov arguments summarized in the following theorem, which can be found as \cite[Theorem 4.10]{khalil2002nonlinear}. For more details on related Lyapunov theory, see also~\cite{hahn1967stability, sastry2013nonlinear}.
\begin{theorem}
\label{thm:Lyapunov}
	An equilibrium $h^*$ for $\dot{h}=f(t, h)$ is globally exponentially stable if there exists a continuously differentiable function $V:[0,\infty)\times \mathbb{R}^N \to [0,\infty)$ such that for all $h \in \mathbb{R}^N$ and $t \geq 0$,
	\[
	k_1 \|h - h^\ast\|^{\alpha} \leq V(t,h) \leq k_2 \|h - h^\ast\|^{\alpha}, \quad\mbox{and}\quad \frac{\partial V}{\partial t} + \frac{\partial V}{\partial h} \leq -k_3 \|h - h^\ast\|^{\alpha},
	\]
	for some constants $k_1,k_2,k_3,\alpha > 0$.
	and $\dot{V}(h) < 0$ for $h \neq h^\ast$. 
\end{theorem}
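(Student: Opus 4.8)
The plan is to prove the implication directly by tracking the scalar quantity $W(t) := V(t, h(t))$ along an arbitrary solution $h(t)$ of $\dot{h} = f(t,h)$, thereby reducing the vector-valued decay estimate to a one-dimensional differential inequality that I can integrate explicitly. First I would differentiate $W$ using the chain rule, obtaining $\dot{W}(t) = \frac{\partial V}{\partial t} + \frac{\partial V}{\partial h} f(t, h(t))$, which is precisely the total derivative $\dot{V}$ along trajectories appearing in the hypothesis. The decay assumption then yields $\dot{W}(t) \leq -k_3 \|h(t) - h^\ast\|^\alpha$.

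Next I would eliminate the right-hand side in favour of $W$ itself. Since the upper sandwich bound gives $\|h - h^\ast\|^\alpha \geq V(t,h)/k_2$, substituting this into the decay estimate produces the autonomous linear differential inequality $\dot{W}(t) \leq -(k_3/k_2)\, W(t)$. Applying the comparison lemma (equivalently, applying Grönwall's inequality to $t \mapsto e^{(k_3/k_2)t} W(t)$, whose derivative is nonpositive) then gives $W(t) \leq W(0)\, e^{-(k_3/k_2)t}$ for all $t$ in the interval of existence.

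It then remains to transfer this exponential decay of $W$ back to $\|h(t) - h^\ast\|$. Here I would combine the lower bound $k_1 \|h(t) - h^\ast\|^\alpha \leq W(t)$ with $W(0) = V(0, h(0)) \leq k_2 \|h(0) - h^\ast\|^\alpha$, chaining them to obtain $\|h(t) - h^\ast\|^\alpha \leq (k_2/k_1)\,\|h(0)-h^\ast\|^\alpha\, e^{-(k_3/k_2)t}$. Taking $\alpha$-th roots yields the estimate $\|h(t)-h^\ast\| \leq (k_2/k_1)^{1/\alpha}\|h(0)-h^\ast\|\, e^{-(k_3/(\alpha k_2))t}$, which is exactly the definition of global exponential stability with $C = (k_2/k_1)^{1/\alpha}$ and $\lambda = k_3/(\alpha k_2)$.

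The main obstacle I anticipate is not the estimate itself, which is a routine Grönwall argument, but justifying that the solution $h(t)$ is genuinely defined for all $t \geq 0$; a priori the differential inequality is valid only on the maximal interval of existence. I would close this gap by observing that the bound derived above forces $\|h(t) - h^\ast\|$ to remain uniformly bounded on any finite time window, which precludes finite-time blow-up and hence guarantees global existence via the standard continuation argument. A secondary subtlety is that the comparison lemma is invoked for a $C^1$ (in particular locally Lipschitz) comparison function, so I would briefly confirm that its hypotheses are met before applying it.
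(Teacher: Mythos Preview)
Your argument is correct and is precisely the standard textbook proof: differentiate $V$ along trajectories, convert the decay hypothesis into the linear differential inequality $\dot{W}\le -(k_3/k_2)W$, integrate via Gr\"onwall, and sandwich back using the two-sided bound on $V$; the continuation argument for global existence is also the right way to close the loop.

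There is nothing to compare against in the paper itself: the authors do not supply a proof of this theorem but simply quote it as \cite[Theorem~4.10]{khalil2002nonlinear} and use it as a black box in their proof of Theorem~\ref{thm:StabilityMain}. Your write-up is essentially the proof one finds in Khalil, so it is entirely consistent with what the paper relies on. One small cosmetic point: the statement as written in the paper has the sloppy expression $\frac{\partial V}{\partial t}+\frac{\partial V}{\partial h}$ where $\frac{\partial V}{\partial t}+\frac{\partial V}{\partial h}\cdot f(t,h)$ is meant; you correctly interpreted this as the total derivative $\dot V$ along trajectories, which is the only sensible reading.
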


To simplify matters, we shall choose a Lyapunov function $V:\mathbb{R}^N\to[0,\infty)$ that is independent of time. The most common type of Lyapunov function satisfying the conditions of Theorem \ref{thm:Lyapunov} is of the form $V(h) = (h - h^\ast)^T P (h - h^\ast)$, where $P$ is a positive definite matrix. One need only show that $\dot{V}(h) \leq -(h-h^\ast)^T Q (h-h^\ast)$ for some other positive definite matrix $Q$ to guarantee global exponential stability.

The construction of the Lyapunov function $V$ that satisfies the conditions of Theorem \ref{thm:Lyapunov} is accomplished using the Kalman-Yakubovich-Popov lemma, which is a statement regarding \emph{strictly positive real transfer functions}. We use the following definition, equivalent to other standard definitions by \cite[Lemma 6.1]{khalil2002nonlinear}.
\begin{definition}
A function $G:\mathbb{C} \to \mathbb{C}^{N\times N}$ is \emph{strictly positive real} if it satisfies the following:
\begin{enumerate}[label=(\roman*)]
    \item The poles of $G(s)$ have negative real parts.
    \item $G(i\omega) + G(-i\omega)^T$ is positive definite for all $\omega \in \mathbb{R}$, where $i = \sqrt{-1}$.
    \item Either $G(\infty) + G(\infty)^T$ is positive definite or it is positive semidefinite and $\lim_{\omega\to\infty} \omega^2 M^T [G(i\omega)+G(-i\omega)^T]M$ is positive definite for any $N \times (N - q)$ full-rank matrix $M$ such that $M^T[G(\infty)+G(\infty)^T] M = 0$, where $q = \mathrm{rank}[G(\infty)+G(\infty)^T]$.
\end{enumerate}
\end{definition}
The following is presented in \cite[Lemma 6.3]{khalil2002nonlinear}. 
\begin{lemma}[Kalman-Yakubovich-Popov]
\label{lem:KYP}
Let $A,W:\mathbb{R}^N \to \mathbb{R}^N$ be full-rank square matrices. There exists a symmetric positive-definite matrix $P$ and matrices $L,U$ and a constant $\epsilon > 0$ such that
\begin{align*}
PA + A^T P &= -L^T L - \epsilon P \\
P &= L^T U - W^T \\
U^T U &= 0,
\end{align*}
if and only if the \emph{transfer function} $G(s) = W(sI-A)^{-1}$ is strictly positive real. In this case, we may take $\epsilon = 2\mu$, where $\mu > 0$ is chosen so that $G(s - \mu)$ remains strictly positive real. 
\end{lemma}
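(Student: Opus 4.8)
The statement is the classical Kalman--Yakubovich--Popov (positive real) lemma specialised to the strictly proper transfer function $G(s) = W(sI-A)^{-1}$, and I would prove the two implications separately: the forward direction is a direct algebraic verification, while the reverse direction rests on a spectral factorisation.

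\textbf{Sufficiency (equations $\Rightarrow$ strictly positive real).} Suppose $P \succ 0$, $L$, $U$ and $\epsilon>0$ satisfy the three relations. From the first, $PA + A^T P = -L^TL - \epsilon P \preceq -\epsilon P \prec 0$, so $A$ is Hurwitz by the Lyapunov criterion; since $W$ is invertible there is no pole--zero cancellation and the poles of $G$ are exactly the eigenvalues of $A$, giving condition (i). Writing $R(\omega) = (i\omega I - A)^{-1}$ and using $A R(\omega) = i\omega R(\omega) - I$, I would multiply the first relation by $R(\omega)^*$ on the left and $R(\omega)$ on the right to get $R(\omega)^* P + P R(\omega) = R(\omega)^* L^T L R(\omega) + \epsilon R(\omega)^* P R(\omega)$. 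Substituting the second relation into $G(i\omega)+G(-i\omega)^T = W R(\omega) + R(\omega)^* W^T$ and completing the square (using $U^TU=0$) yields
\[ G(i\omega) + G(-i\omega)^T = \big(U + L R(\omega)\big)^* \big(U + L R(\omega)\big) + \epsilon\, R(\omega)^* P R(\omega), \]
which is positive definite since $\epsilon R(\omega)^* P R(\omega) \succ 0$; this is condition (ii). Letting $\omega \to \infty$ and using $R(\omega) \sim (i\omega)^{-1} I$, the same identity gives $\omega^2[G(i\omega)+G(-i\omega)^T] \to L^TL + \epsilon P \succ 0$, and since $G(\infty)=0$ (so $q=0$ and $M=I$) this is condition (iii). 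Hence $G$ is strictly positive real.

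\textbf{Necessity (strictly positive real $\Rightarrow$ equations).} Here I would exploit the shift built into the statement. By the characterisation of strict positive realness, there is $\mu>0$ such that the shifted function $\tilde G(s) \coloneqq G(s-\mu) = W(sI - \tilde A)^{-1}$, with $\tilde A \coloneqq A + \mu I$ still Hurwitz, is (non-strictly) positive real. Applying the positive real lemma to $\tilde G$ produces $P \succ 0$, $L$, $U$ with $P\tilde A + \tilde A^T P = -L^TL$, $P = L^TU - W^T$ and $U^TU=0$; here $P$ is positive definite because $(\tilde A, I)$ is controllable and $(\tilde A, W)$ is observable ($W$ invertible), so the realisation is minimal. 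Substituting $\tilde A = A + \mu I$ turns the Lyapunov equation into $PA + A^T P = -L^TL - 2\mu P$, so the desired relations hold with $\epsilon = 2\mu$; the other two relations do not involve $A$ and survive the shift unchanged.

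\textbf{Main obstacle.} The only non-routine step is the construction of $P$ in the shifted, boundary case, \ie the positive real lemma for $\tilde G$. The standard route is spectral factorisation: positive realness makes $\tilde\Phi(\omega) \coloneqq \tilde G(i\omega)+\tilde G(-i\omega)^T$ positive semidefinite on the imaginary axis, and one must factor it as $\tilde\Phi(s) = H(-s)^T H(s)$ with $H(s) = U + L(sI-\tilde A)^{-1}$ stable and minimum phase; reading off $L,U$ and equating two state-space realisations of $\tilde G$ then forces a Lyapunov equation whose solution is the required $P$, with positivity inherited from stability and minimality. Equivalently, one checks that the Hamiltonian matrix of the associated algebraic Riccati equation has no imaginary-axis eigenvalues (a consequence of $\tilde\Phi \succeq 0$), so that its stable invariant subspace yields a stabilising $P \succ 0$. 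Establishing this factorisation/Riccati solvability is where the real work lies; everything else is bookkeeping.
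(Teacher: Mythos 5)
The paper does not prove this lemma at all: it is imported verbatim from Khalil's \emph{Nonlinear Systems} (Lemma 6.3 there), so any actual proof is by construction "a different route," and yours is the standard textbook one. Your sufficiency direction is essentially complete: Hurwitzness of $A$ from the Lyapunov inequality, the resolvent identity $PR(\omega) + R(\omega)^*P = R(\omega)^*\bigl(L^TL + \epsilon P\bigr)R(\omega)$ with $R(\omega) = (i\omega I - A)^{-1}$, the completion of the square using $U^TU=0$, and the $\omega^2$-limit for condition (iii) with $q=0$ are all the right steps. Your necessity direction, however, is only a reduction: the shift $s \mapsto s-\mu$, $\tilde A = A + \mu I$, correctly transfers the problem to the non-strict positive real lemma (with $\epsilon = 2\mu$ falling out of the shifted Lyapunov equation, exactly as the statement's last sentence indicates), and minimality of the realization $(\tilde A, I, W)$ is immediate since $B=I$ and $W$ is invertible; but the spectral factorization (equivalently, the Riccati/Hamiltonian argument) that actually produces $P, L, U$ in the boundary case is named rather than carried out. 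That is the entire content of the positive real lemma, so as written the necessity half is an outline resting on another unproved classical result --- still more than the paper offers, but not self-contained.

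One concrete issue you should be aware of: your completing-the-square identity
\[
G(i\omega)+G(-i\omega)^T = \bigl(U + LR(\omega)\bigr)^*\bigl(U + LR(\omega)\bigr) + \epsilon\, R(\omega)^* P R(\omega)
\]
is the one that holds for the \emph{standard} second KYP equation $P = W^T - L^TU$ (i.e., $PB = C^T - L^TU$ with $B=I$, $C=W$). The lemma as printed has $P = L^TU - W^T$. Substituting that version literally gives $W = U^TL - P$ and hence $G(i\omega)+G(-i\omega)^T = U^TLR(\omega) + R(\omega)^*L^TU - R(\omega)^*\bigl(L^TL+\epsilon P\bigr)R(\omega)$, which (since $U^TU=0$ forces $U=0$ over the reals) is negative semidefinite, so the printed equivalence fails for $G(s) = W(sI-A)^{-1}$. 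The source of the discrepancy is an internal sign inconsistency in the paper: the proof of Theorem 1(b) works with $G(s) = W(A-sI)^{-1} = -W(sI-A)^{-1}$, for which the printed form $P = L^TU - W^T$ is the correct one. Your argument proves the standard statement, not the printed one; either flip the sign of the transfer function or of the second equation. Relatedly, note that $U^TU=0$ collapses $U$ to zero, so the second equation really asserts $P = \mp W^T$ and hence implicitly requires $W$ to be symmetric definite --- a restriction that is consistent with strict positive realness of a strictly proper square $G$ (the $O(1/\omega)$ term $\tfrac{1}{i\omega}(W - W^T)$ must vanish), but worth stating explicitly.
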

A shorter proof for case (a) is available to us through the (multivariable) \emph{circle criterion} --- the following theorem is a corollary of \cite[Theorem 7.1]{khalil2002nonlinear} suitable for our purposes.
\begin{theorem}[Circle Criterion]
\label{thm:CircleCriterion}
The system of differential equations
\[
\dot{h} = A h + \psi(t, W h)
\]
is globally exponentially stable towards an equilibrium at the origin if $\|\psi(t,y)\| \leq M\|y\|$ for some $M > 0$ and
$
Z(s) = [I + M G(s)][I - M G(s)]^{-1}
$
is strictly positive real, where $G(s) = W(sI - A)^{-1}$. 
\end{theorem}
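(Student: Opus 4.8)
The plan is to obtain this as a specialization of the general circle criterion to the symmetric sector $[-M, M]$, proved through a Lyapunov function whose existence is supplied by the Kalman--Yakubovich--Popov lemma (Lemma \ref{lem:KYP}). Without loss of generality I translate coordinates so that the equilibrium sits at the origin, write $y = Wh$ and $\psi = \psi(t, y)$, so that the dynamics read $\dot{h} = Ah + \psi$. The norm bound $\|\psi(t,y)\| \leq M\|y\|$ is first recast as the quadratic sector inequality $\psi^T\psi \leq M^2 y^T y$, equivalently $(\psi - My)^T(\psi + My) \leq 0$. This is exactly the multivariable sector condition with lower gain $-M$ and upper gain $M$, for which the loop-transformed transfer function $[I + K_2 G][I + K_1 G]^{-1}$ with $K_1 = -M$, $K_2 = M$ is precisely $Z(s) = [I + MG(s)][I - MG(s)]^{-1}$, where $G(s) = W(sI - A)^{-1}$.

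Next I would exploit strict positive realness of $Z(s)$. Since $Z$ is strictly positive real, the KYP lemma furnishes a symmetric positive-definite matrix $P$, auxiliary matrices $L, U$, and a constant $\epsilon > 0$ satisfying the KYP identities tying $PA + A^T P$ and $P$ to $W$ and $M$ (after the loop transformation that maps the $[-M, M]$ sector to the form covered by Lemma \ref{lem:KYP}). I then take the time-independent Lyapunov candidate $V(h) = h^T P h$. Because $P$ is positive definite, this immediately gives $\lambda_{\min}(P)\|h\|^2 \leq V(h) \leq \lambda_{\max}(P)\|h\|^2$, the two-sided sandwich required by Theorem \ref{thm:Lyapunov} with exponent $\alpha = 2$.

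The core computation is the derivative along trajectories,
\[
\dot{V} = h^T(A^T P + PA)h + 2\psi^T P h.
\]
Substituting the KYP identities converts the indefinite cross term $2\psi^T P h$ into a perfect square $-\|Lh + U\psi\|^2 \leq 0$ together with the sector term $M^2 y^T y - \psi^T\psi \geq 0$ and a strictly dissipative remainder $-\epsilon\, h^T P h$. Arranging the completion of squares so that the nonpositive square term and the favorably signed sector term combine correctly yields $\dot{V} \leq -\epsilon\, h^T P h = -\epsilon V$. Integrating this differential inequality gives $V(h(t)) \leq e^{-\epsilon t} V(h(0))$, and converting back through the sandwich bounds produces $\|h(t)\| \leq \sqrt{\lambda_{\max}(P)/\lambda_{\min}(P)}\, e^{-\epsilon t/2}\|h(0)\|$, which is exactly the global exponential stability estimate (\ref{eq:ExpStable}) with $C = \sqrt{\lambda_{\max}(P)/\lambda_{\min}(P)}$ and $\lambda = \epsilon/2$; equivalently, $\dot{V} \leq -\epsilon V < 0$ for $h \neq h^\ast$ verifies the hypotheses of Theorem \ref{thm:Lyapunov} directly.

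The main obstacle is the bookkeeping in the loop transformation and the completion of squares: one must realize $Z(s)$ in state space, match its state matrix and the KYP identities to the original pair $(A, W)$, and arrange the substitution so that $2\psi^T P h$ is absorbed exactly by the perfect-square term while the sector inequality $\psi^T\psi \leq M^2 y^T y$ enters with precisely the sign needed to leave only the strictly negative $-\epsilon V$ contribution. Keeping the feedthrough term of $Z$ consistent with the hypotheses of the KYP lemma (note $Z(\infty) = I \neq 0$) is the delicate point; by contrast, the explicit time-dependence of $\psi$ causes no difficulty, since $V$ is chosen time-independent and the sector bound $\|\psi(t,y)\| \leq M\|y\|$ holds uniformly in $t$.
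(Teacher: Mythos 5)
The paper does not actually prove this statement: it is quoted verbatim as a corollary of Theorem 7.1 of \cite{khalil2002nonlinear} and used as a black box in the proof of case (a) of Theorem \ref{thm:StabilityMain}. Your sketch is, in substance, the standard textbook proof of that cited result --- recast the norm bound as the symmetric sector condition $(\psi - My)^T(\psi + My) \leq 0$, loop-transform the sector $[-MI, MI]$ so that the relevant transfer function becomes $Z(s) = [I + MG(s)][I - MG(s)]^{-1}$, invoke Kalman--Yakubovich--Popov on the strictly positive real $Z$, and run the quadratic Lyapunov function $V(h) = h^T P h$ through Theorem \ref{thm:Lyapunov} --- so it is correct in outline and consistent with the source the paper leans on; it would make the appendix self-contained where the paper chose to outsource. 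Two points of care, both of which you flag but should be explicit about if you write this out. First, Lemma \ref{lem:KYP} as stated in the paper is the feedthrough-free special case ($U^T U = 0$, tailored to $G(s) = W(sI-A)^{-1}$); since $Z(\infty) = I$, you need the general KYP equations with $U^T U = D_Z + D_Z^T = 2I$, so you cannot literally cite the paper's Lemma \ref{lem:KYP}. Second, the KYP identities you obtain are attached to a state-space realization of $Z(s)$, whose state matrix is $A$ shifted by the loop transformation (of the form $A \pm MW$), not $A$ itself; your display $\dot{V} = h^T(A^TP + PA)h + 2\psi^T P h$ is written for the original pair, and the discrepancy is exactly what the sector term $M^2 y^T y - \psi^T\psi \geq 0$ must absorb in the completion of squares. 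That reconciliation goes through (it is how Khalil's proof works, and it mirrors the explicit KYP-plus-Lyapunov computation the paper does carry out for case (b)), but it is the one step where a careless substitution would break the sign bookkeeping, so it deserves to be written out rather than deferred as ``bookkeeping.''
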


Both the Kalman-Yakubovich-Popov lemma and the circle criterion are classical results in control theory, and are typically discussed in the setting of feedback systems \cite[Chapter 6, 7]{khalil2002nonlinear}. Our presentation here is less general than the complete formulation, but makes clearer the connection to RNNs. With these tools, we state our proof of Theorem \ref{thm:StabilityMain}.

\begin{proof}[Proof of Theorem \ref{thm:StabilityMain}]
To begin, we shall center the differential equation about the equilibrium. By assumption, there exists $h^\ast$ such that $A h^\ast = -\sigma(W h^\ast + Ux(t) + b)$. Letting $\bar{h} = h - h^\ast$, we find that
\begin{align}
\dot{\bar{h}}&=Ah+\sigma(Wh+Ux(t)+b)\nonumber\\
&=A\bar{h}+Ah^{\ast}+\sigma(W\bar{h}+Wh^{\ast}+Ux(t)+b)\nonumber\\
&=A\bar{h}+\sigma(W\bar{h}+Wh^{\ast}+Ux(t)+b)-\sigma(Wh^{\ast}+Ux(t)+b).\label{eq:CenterODE}
\end{align}
It will suffice to show that (\ref{eq:CenterODE}) is globally exponentially stable at the origin.

Let us begin with case (a). The proof follows arguments analogous to \cite[Example 7.1]{khalil2002nonlinear}. Let $G(s) = W(A - sI)^{-1}$ denote the transfer function for the system (\ref{eq:CenterODE}). Letting
\[
\psi(t, x) = \sigma(x + W h^\ast + Ux(t) + b) - \sigma(Wh^\ast + Ux(t)+b),
\]
since $\sigma$ is $M$-Lipschitz, we know that $\|\psi(t, x)\| \leq M \|x\|$ for any $x \in \mathbb{R}^N$. Therefore, let $Z(s) = [I + M G(s)][I - M G(s)]^{-1}$ denote the transfer function in the circle criterion. Our objective is to show that $Z(s)$ is strictly positive real --- by Theorem \ref{thm:CircleCriterion}, this will guarantee the desired global exponential stability of (\ref{eq:RNNODEMain}). First, we need to show that the poles of $Z(s)$ have negative real parts. This can only occur when $G(s)$ itself has poles or $I - M G(s)$ is singular. The former case occurs precisely where $A - sI$ is singular, which occurs when $s$ is an eigenvalue of $A$. Since $A + A^T$ is assumed to be negative definite, $A$ must have eigenvalues with negative real part by Lemma \ref{lem:SymEigs}, and so the poles of $G(s)$ also have negative real parts. The latter case is more difficult to treat. First, since $\sigma_{\max}(AB) \leq \sigma_{\max}(A)\sigma_{\max}(B)$ and $\sigma_{\max}(B^{-1}) = \sigma_{\min}(B)^{-1}$, 
\begin{equation}
\label{eq:TransferSigmaMax}
\sigma_{\max}(G(s)) \leq \frac{\sigma_{\max}(W)}{\sigma_{\min}(A - s I)}.
\end{equation}
Therefore, we observe that
\begin{align*}
\sigma_{\min}(I - M G(s)) &\geq 1 - \sigma_{\max}(M G(s)) \\
&\geq 1 - M \sigma_{\max}(G(s)) \\
&\geq 1 - \frac{M \sigma_{\max}(W)}{\sigma_{\min}(A - sI)}.
\end{align*}
From the Fan-Hoffman inequality~\citep[Proposition III.5.1]{bhatia2013matrix}, we have that
\[
\sigma_{\min}(A - sI) = \sigma_{\min}(s I - A) \geq \lambda_{\min}\left(\Re(s) I - \frac{A+A^T}{2}\right) = \Re(s) + \lambda_{\min}\left(-\frac{A + A^T}{2}\right),
\]
and since $A + A^T$ is negative definite, for any $s$ with $\Re(s) \geq 0$,
\begin{equation}
\sigma_{\min}(A - sI)  \geq \Re(s) + \sigma_{\min}\left(\frac{A+A^T}{2}\right) \geq \sigma_{\min}(A^{\sym}).
\label{eq:SigmaMinImag}
\end{equation}
Since $\sigma_{\min}(A^{\sym}) > M \sigma_{\max}(W)$, it follows that $\sigma_{\min}(I - M G(s)) > 0$ whenever $s$ has non-negative real part, and so the poles of $Z(s)$ must have negative real parts.

Next, we need to show that $Z(i \omega) + Z(-i \omega)^T$ is positive definite for all $\omega \in \mathbb{R}$. Observe that
\begin{align*}
Z(i \omega) + Z(-i \omega)^T &= [I + M G(i \omega)][I - M G(i \omega)]^{-1}
+ [I - M G(-i \omega)^T]^{-1}[I + M G(-i \omega)^T] \\
&= 2[I - M G(-i \omega)^T]^{-1}[I - M^2 G(-i\omega)^T G(i\omega)][I - M G(i \omega)]^{-1}.
\end{align*}
From Sylvester's law of inertia, we may infer that $Z(i\omega) + Z(-i\omega)^T$ is positive definite if and only if $I + Y_\omega$ is positive definite, where $Y_\omega = M^2 G(-i\omega)^T G(i \omega)$. If we can show that the eigenvalues of $Y_\omega$ lie strictly within the unit circle, that is, $\sigma_{\max}(Y_\omega) < 1$ for all $\omega \in \mathbb{R}$, then $I + Y_\omega$ will necessarily be positive definite. From (\ref{eq:TransferSigmaMax}) and (\ref{eq:SigmaMinImag}), we may verify that
\[
\sup_{\omega \in \mathbb{R}} \sigma_{\max}(G(i\omega)) \leq \sup_{\omega \in \mathbb{R}} \frac{\sigma_{\max}(W)}{\sigma_{\min}(A - i \omega I)} \leq \frac{\sigma_{\max}(W)}{\sigma_{\min}(A^{\sym})}.
\]
Therefore,
\[
\sigma_{\max}(Y_\omega) \leq M^2 \sigma_{\max}(G(-i\omega)^T) \sigma_{\max}(G(i\omega)) \leq \left(\frac{M \sigma_{\max}(W)}{\sigma_{\min}(A^{\sym})}\right)^2 < 1,
\]
by assumption. Finally, since $Z(\infty) + Z(\infty)^T = 2I$ is positive definite, $Z(s)$ is strictly positive real and Theorem \ref{thm:CircleCriterion} applies. 

Now, consider case (b). The proof proceeds in two steps. First, we verify that the transfer function $G(s) = W(A - sI)^{-1}$ satisfies the conditions of the Kalman-Yakubovich-Popov lemma. Then, using the matrices $P$, $L$, $U$, and the constant $\epsilon$ inferred from the lemma, a Lyapunov function is constructed which satisfies the conditions of Theorem \ref{thm:Lyapunov}, guaranteeing global exponential stability. Once again, condition (i) of Lemma \ref{lem:KYP} is straightforward to verify: $G(s)$ exhibits poles when $s$ is an eigenvalue of $A$, and so the poles of $G(s)$ also have negative real parts. Furthermore, condition (iii) is easily satisfied with $M = I$ since $G(\infty)+G(\infty)^T = 0$. To show that condition (ii) holds, observe that for any $\omega \in \mathbb{R}$, letting $A^{-T} = (A^{-1})^T$ for brevity,
\begin{align*}
G(i\omega) + G(-i\omega)^T &= W(A - i\omega I)^{-1} + (A + i\omega I)^{-T} W^T \\
&= (A + i\omega I)^{-T} [(A+i\omega I)^T W + W^T (A - i\omega I)](A - i\omega I)^{-1}.
\end{align*}
Since the inner matrix factor is Hermitian, Sylvester's law of inertia implies that $G(i\omega) + G(-i\omega)^T$ is positive definite if and only if 
\[
B_\omega \coloneqq (A + i\omega I)^T W + W^T (A - i\omega I).
\]
is positive definite. Since $B_\omega$ is a Hermitian matrix, it has real eigenvalues, with minimal eigenvalue given by the infimum of the Rayleigh quotient:
\begin{align*}
\lambda_{\min}(B_\omega) &= \inf_{\|v\| = 1} v^T B_\omega v \\
&= \inf_{\|v\| = 1} v^T (A^T W + W^T A)v + i\omega v^T(W - W^T) v \\
&= \inf_{\|v\| = 1} v^T (A^T W + W^T A)v \\
&= \lambda_{\min}(A^T W + W^T A).
\end{align*}
By assumption, $A^T W + W^T A$ has strictly positive eigenvalues, and hence $B_\omega$ and $G(i\omega) + G(-i\omega)^T$ are positive definite. Therefore, Lemma \ref{lem:KYP} applies, and we obtain matrices $P,L,U$ and a constant $\epsilon > 0$ with the corresponding properties.

Now we may construct our Lyapunov function $V$. Let $v = W \bar{h}$ and
\[
u(t) = \sigma(v(t) + W h^\ast + U x(t) + b) - \sigma(W h^\ast + U x(t) + b),
\]
so that $\dot{\bar{h}} = A \bar{h} + u$. 
Since $\sigma$ is monotone non-decreasing, $\sigma(x) - \sigma(y) \geq 0$ for any $x \geq y$. This implies that for each $i = 1,\dots,N$, $v_i$ and $u_i$ have the same sign. In particular, $v^T u \geq 0$. Now, let $V(h) = h^T P h$ be our Lyapunov function, noting that $V$ is independent of $t$. Taking the derivative of the Lyapunov function over (\ref{eq:CenterODE}) and using the properties of $P,L,U,\epsilon$,
\begin{align*}
\dot{V}(\bar{h}) &= \bar{h}^T P \dot{\bar{h}} + \dot{\bar{h}}^T P \bar{h} \\
&= \bar{h}^T (P A + A^T P) \bar{h} + 2 \bar{h}^T P u \\
&= \bar{h}^T (-L^T L - \epsilon P) \bar{h} + 2 \bar{h}^T (L^T U - W^T) u \\
&= -(L\bar{h})^T (L\bar{h}) + (L \bar{h})^T U u + (U u)^T (L \bar{h}) - u^T U^T U u - 2 v^T u\\
&= -(L \bar{h} + U u)^T(L\bar{h} + U u) - \epsilon \bar{h}^T P \bar{h} - 2 v^T u. 
\end{align*}
Since $v^T u \geq 0$ and $(L \bar{h}+Uu)^T(L\bar{h}+Uu) \geq 0$, it follows that $\dot{V}(\bar{h}) \leq -\epsilon \lambda_{\min}(P)\|h\|^2$, and hence global exponential stability follows from Theorem \ref{thm:Lyapunov} and positive-definiteness of $P$.
\end{proof}

To finish off discussion regarding the results from Sec. 3, we provide a quick proof of Lemma \ref{lem:CGH} using a simple diagonalization argument.

\begin{proof}[Proof of Lemma \ref{lem:CGH}]
Since $A$ is symmetric and real-valued, by \cite[Theorem 4.1.5]{horn2012matrix}, there exists an orthogonal matrix $P$ and a real diagonal matrix $D$ such that $A = PDP^T$. Letting $z = P^T h$ where $h$ satisfies (\ref{eq:RNNODEMain}), since $h = P z$, we see that
\begin{align*}
\dot{z} &= P^T P D P^T h + P^T \sigma(W h + U x + b) \\
&= D z + P^T \sigma(W P z + U x + b).
\end{align*}
Therefore, $z$ satisfies (\ref{eq:CGH}) with $L = P^T$ and $V = W P$, both of which are nonsingular by orthogonality of $P$. By the same argument, for any equilibrium $h^\ast$, taking $z^\ast = P^T h^\ast$,
\begin{align*}
D z^\ast + P^T \sigma(W P z^\ast + U x + b)
&= P^T ( P D P^T h^\ast + \sigma(W h^\ast + U x + b) ) \\
&= P^T ( A h^\ast + \sigma(W h^\ast + U x + b) ) = 0,
\end{align*}
implying that $z^\ast$ is an equilibrium of (\ref{eq:CGH}). Furthermore, since
\begin{align*}
\|z - z^\ast\|^2 &= (P^T h - P^T h^\ast)^T (P^T h - P^T h^\ast) \\
&= (h - h^\ast)^T P P^T (h - h^\ast) = \|h - h^\ast\|^2,
\end{align*}
from orthogonality of $P$. Because every form of Lyapunov stability, both local and global, including global exponential stability, depend only on the norm $\|h - h^\ast\|$ \cite[Definitions 4.4 and 4.5]{khalil2002nonlinear}, $h^\ast$ is stable under any of these forms if and only if $z^\ast$ is also stable.
\end{proof}

We remark that the proof of Lemma \ref{lem:CGH} can extend to matrices $A$ which have real eigenvalues and are diagonalizable. These attributes are implied for symmetric matrices. However, they can be difficult to ensure in practice for nonsymmetric matrices without imposing difficult structural constraints.

\subsection{Proof of Proposition \ref{thm:our_second_thm}}
\label{sxn:proof_of_second_thm}

The proof of Proposition \ref{thm:our_second_thm} relies on the following lemma, which we also have made use of several times throughout this work. 
\begin{lemma}
\label{lem:SymEigs}
For any matrix $A \in \mathbb{R}^{N \times N}$, the real parts of the eigenvalues $\Re \lambda_i(A)$ are contained in the interval $[\lambda_{\min}(A^{\sym}),\lambda_{\max}(A^{\sym})]$, where $A^{\sym} = \frac12(A + A^T)$. 
\end{lemma}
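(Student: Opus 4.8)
The plan is to reduce the claim to the standard Rayleigh-quotient bounds for the real symmetric matrix $A^{\sym}$. First I would fix an arbitrary eigenvalue $\lambda$ of $A$ together with a (possibly complex) eigenvector $v \in \mathbb{C}^N$, normalized so that $v^* v = 1$, where $v^*$ denotes the conjugate transpose. From the eigenvalue equation $Av = \lambda v$ we immediately obtain $\lambda = v^* A v$.

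The key step is to extract the real part and recognize it as a Rayleigh quotient of $A^{\sym}$. Since $A$ is real-valued, we have $\overline{v^* A v} = (v^* A v)^* = v^* A^T v$, and hence
\[
\Re \lambda = \tfrac12\left(v^* A v + v^* A^T v\right) = v^* A^{\sym} v.
\]
Because $A^{\sym} = \tfrac12(A + A^T)$ is real symmetric, it is Hermitian, so the scalar $v^* A^{\sym} v$ is automatically real and coincides with the Rayleigh quotient of $A^{\sym}$ evaluated at the unit vector $v$.

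Finally, I would invoke the extremal characterization of the Rayleigh quotient (the Courant--Fischer min--max theorem, or equivalently the spectral theorem) for Hermitian matrices: for any unit vector $v$,
\[
\lambda_{\min}(A^{\sym}) \leq v^* A^{\sym} v \leq \lambda_{\max}(A^{\sym}).
\]
Combining this with the identity $\Re\lambda = v^* A^{\sym} v$ gives $\lambda_{\min}(A^{\sym}) \leq \Re\lambda \leq \lambda_{\max}(A^{\sym})$, and since $\lambda$ was an arbitrary eigenvalue, the claim follows. The proof is essentially routine; the only point requiring care is the bookkeeping with complex conjugates, since a real matrix may possess genuinely complex eigenpairs, so the vectors must be treated in $\mathbb{C}^N$ with the conjugate-transpose inner product rather than the real transpose.
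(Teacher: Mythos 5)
Your proof is correct and follows essentially the same route as the paper's: take a unit eigenvector $v \in \mathbb{C}^N$, observe that $\Re\lambda = v^* A^{\sym} v$ via conjugate symmetry of the form, and apply the min--max (Rayleigh quotient) bounds for the Hermitian matrix $A^{\sym}$. Your handling of the complex eigenvectors with the conjugate-transpose inner product is exactly the care the paper's argument also takes.
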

\begin{proof}
Recall by the min-max theorem, for $\langle u, v \rangle = u^\ast v$, where $u^\ast$ is the conjugate transpose of $u$, the upper and lower eigenvalues of $A + A^T$ satisfy
\begin{align*}
\lambda_{\min}(A + A^T) &= \inf_{v \in \mathbb{C}^N,\, \|v\| = 1} \langle v, (A + A^T) v\rangle = \inf_{v \in \mathbb{C}^N,\, \|v\| = 1} \langle v, A v\rangle + \langle A v, v\rangle,\\
\lambda_{\max}(A + A^T) &= \sup_{v \in \mathbb{C}^N,\, \|v\| = 1} \langle v, (A + A^T) v\rangle = \sup_{v \in \mathbb{C}^N,\, \|v\| = 1} \langle v, A v\rangle + \langle A v, v\rangle.
\end{align*}
Let $\lambda_i(A) = u + i \omega$ be an eigenvalue of $A$ with corresponding eigenvector $v$ satisfying $\|v\| = 1$. Since $A v = (u + i \omega) v$,
\[
\langle v, A v \rangle + \langle A v, v \rangle = \langle v, A v \rangle + \overline{\langle v, A v \rangle} = 2 \Re \langle v, A v \rangle = 2u\|v\|^2 = 2u.
\]
Hence, $\lambda_{\min}(A + A^T) \leq u \leq \lambda_{\max}(A+A^T)$.
\end{proof}

\begin{figure}[t]
	\centering
	\begin{subfigure}[t]{0.45\textwidth}
		\centering
		\begin{overpic}[width=1\textwidth]{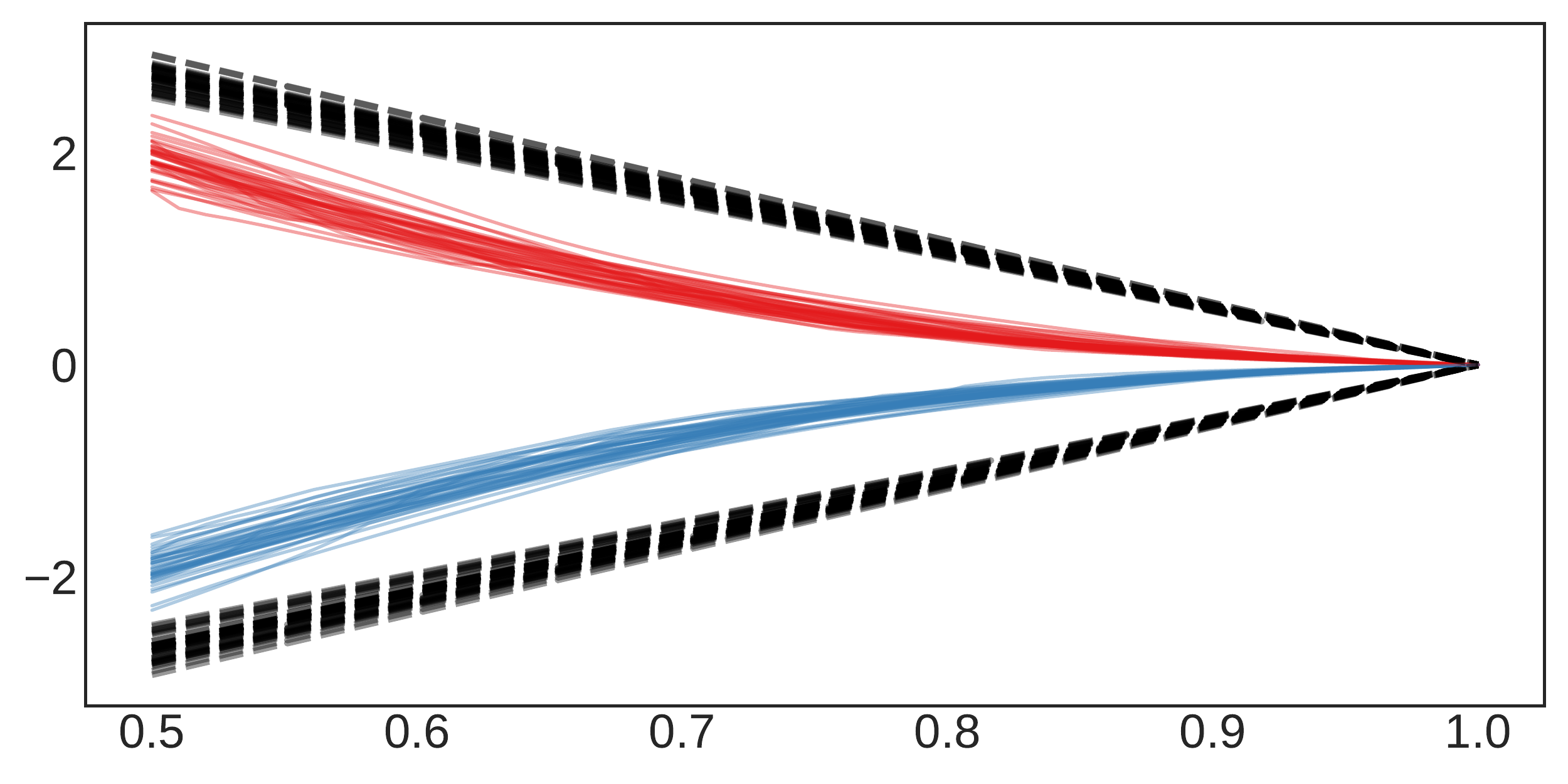}
			\put(-6,16){\rotatebox{90}{\footnotesize magnitude}}			
			\put(31,-3){\footnotesize {tuning parameter, $\beta$}}  	
		\end{overpic}\vspace{+0.4cm}		
		
		\caption{N=64}
	\end{subfigure}
	~
	\begin{subfigure}[t]{0.45\textwidth}
		\centering
		\begin{overpic}[width=1\textwidth]{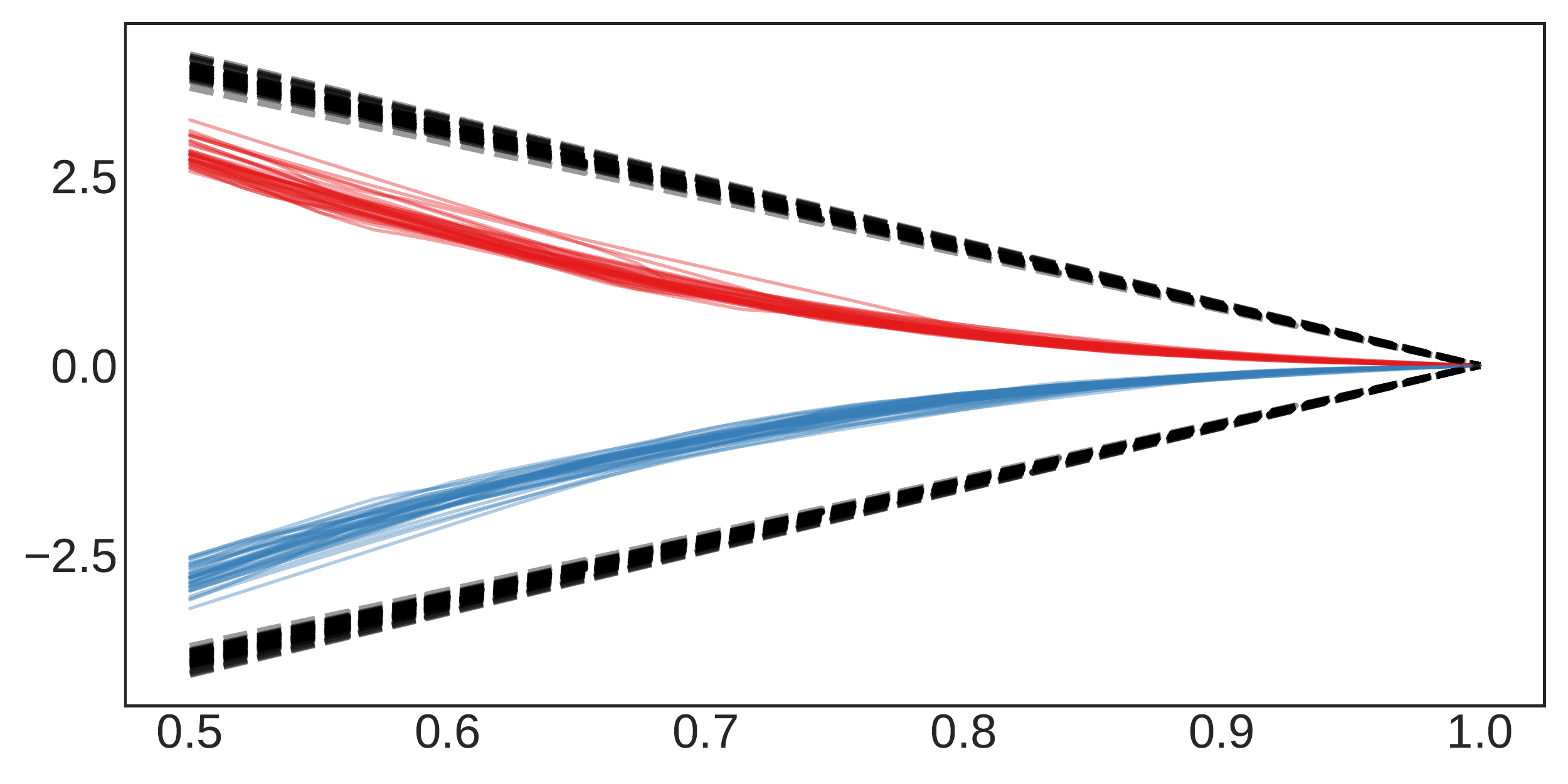} 
			\put(31,-3){\footnotesize {tuning parameter, $\beta$}}  	
		\end{overpic}\vspace{+0.4cm}			
		\caption{N=128}
	\end{subfigure}\vspace{-0.2cm}

	\caption{Empirical evaluation of the theoretical bounds (\ref{eq:IntervalContainer}). The red lines track the largest real part and the blue lines track the smallest real part of the eigenvalues of the hidden-to-hidden matrix $A_\beta$. Each line corresponds to a different hidden-to-hidden matrix of dimension $N=64$ in (a) and $N=128$ in (b). The dashed black lines indicate the theoretical bound for each trial.  }
	\label{fig:projection}
\end{figure}

\begin{proof}[Proof of Proposition \ref{thm:our_second_thm}]
By construction,
$
S_{\beta,\gamma}^{\sym} = S_{\beta,\gamma} + S_{\beta,\gamma}^T = (1-\beta)M^{\sym} - \gamma I,
$
and so from Lemma \ref{lem:SymEigs}, both the real parts $\Re\lambda_i(S_{\beta,\gamma})$ of the eigenvalues of $S_{\beta,\gamma}$ as well as the eigenvalues of $S_{\beta,\gamma}^{\sym}$ lie in the interval
\[
[\lambda_{\min}(S_{\beta,\gamma}^{\sym}), \lambda_{\max}(S_{\beta,\gamma}^{\sym})] = [\lambda_{\min}((1-\beta)M^{\sym} -\gamma I), \lambda_{\max}((1-\beta)M^{\sym} - \gamma I)].
\]
If $\beta < 1$, for any eigenvalue $\lambda$ of $S_{\beta,\gamma}^{\sym}$ with corresponding eigenvector $v$,
\[
(1-\beta)M^{\sym} v - \gamma v = \lambda v,\quad \text{and so} \quad  M^{\sym} v = \frac{\lambda + \gamma}{1 - \beta} v
\]
implying that $\frac{\lambda + \gamma}{1-\beta}$ is an eigenvalue of $M^{\sym}$, and therefore contained in $[\lambda_{\min}(M^{\sym}),\lambda_{\max}(M^{\sym})]$. 
In particular, we find that
\begin{equation}
\label{eq:IntervalContainer}
[\lambda_{\min}(S_{\beta,\gamma}^{\sym}),\lambda_{\max}(S_{\beta,\gamma}^{\sym})] \subseteq [(1-\beta)\lambda_{\min}(M^{\sym}) - \gamma, (1-\beta)\lambda_{\max}(M^{\sym})],
\end{equation}
as required. Finally, if $\beta = 1$, then (\ref{eq:IntervalContainer}) still holds, since both intervals collapse to the single point $\{-\gamma\}$. 
\end{proof}

Figure~\ref{fig:projection} illustrates the effect of $\beta$ onto the eigenvalues of $A_{\beta,\gamma}$ with the largest and smallest real parts. 
It can  be seen, both empirically and theoretically, that the real part of the eigenvalues converges towards zero as $\beta$ tends towards one, \ie, we yield a skew-symmetric matrix with purely imaginary eigenvalues in the limit.
Thus, for a sufficiently large parameter $\beta$ we yield a system that approximately preserves an ``energy'' for a limited time-horizon 
\begin{equation}
\mathcal{R}\lambda_i(A_{\beta,\gamma}) \approx 0, \quad \text{for} \quad i=1,2,\dots,N.
\end{equation}

\subsection{Proof of Lemma \ref{lem:Training}}
First, it follows from Gronwall's inequality that the norm of the final hidden state $\|h(T)\|$ is bounded uniformly in $\beta$. From Weyl's inequalities and the definition of $A_{\beta,\gamma}$,
\[
\max_k |\Delta_\delta \lambda_k(A_{\beta,\gamma}^\sym)| \leq \| \Delta_{\delta}A_{\beta,\gamma}^{\sym}\| =(1-\beta)\| \Delta_{\delta}M_{A}^{\sym}\|.
\]
By the chain rule, for each element $M_A^{ij}$ of the matrix $M_A$,
\[
\frac{\partial L}{\partial M_{A}^{ij}}=\frac{\partial L}{\partial y(T)}\frac{\partial y(T)}{\partial h(T)}\frac{\partial h(T)}{\partial M_{A}^{ij}}=\frac{\partial L}{\partial y(T)}D\frac{\partial h(T)}{\partial M_{A}^{ij}}.
\]
Now, for any collection of parameters $\theta_i$,
\[
\frac{d}{dt}\sum_{i}\frac{\partial h}{\partial\theta_{i}}=A\sum_{i}\frac{\partial h}{\partial\theta_{i}}+\sum_{i}\frac{\partial A}{\partial\theta_{i}}h+\mathrm{sech}^2\left(Wh+Ux+b\right)\left(W\sum_{i}\frac{\partial h}{\partial\theta_{i}}+\sum_{i}\frac{\partial W}{\partial\theta_{i}}h\right),
\]
and from Gronwall's inequality,
\[
\left\lVert \sum_{i}\frac{\partial h(T)}{\partial\theta_{i}}\right\rVert \leq\left(\left\lVert \sum_{i}\frac{\partial A_{\beta,\gamma}}{\partial\theta_{i}}\right\rVert +\left\lVert \sum_{i}\frac{\partial W_{\beta,\gamma}}{\partial\theta_{i}}\right\rVert \right)\left\lVert h\right\rVert e^{(\left\lVert A_{\beta,\gamma}\right\rVert +\left\lVert W_{\beta,\gamma}\right\rVert )T}.
\]
Since $\Delta_{\delta}M_{A}^{\sym}=\delta\frac{\partial L}{\partial M_{A}}+\delta\left(\frac{\partial L}{\partial M_{A}}\right)^{T}$,
\begin{align*}
\left\lVert \Delta_{\delta}M_{A}^{\sym}\right\rVert &\leq\left\lVert \Delta_{\delta}M_{A}^{\sym}\right\rVert _{F}\\
&\leq\delta\sqrt{\sum_{i,j}\left(\frac{\partial L}{\partial M_{A}^{ij}}+\frac{\partial L}{\partial M_{A}^{ij}}\right)^{2}}\\
&\leq\delta\left\lVert \frac{\partial L}{\partial y}\right\rVert \left\lVert D\right\rVert \left\lVert h\right\rVert e^{(\left\lVert A_{\beta,\gamma}\right\rVert +\left\lVert W_{\beta,\gamma}\right\rVert )T}\sqrt{\sum_{i,j}\left\lVert \frac{\partial A_{\beta,\gamma}}{\partial M_{A}^{ij}}+\frac{\partial A_{\beta,\gamma}}{\partial M_{A}^{ji}}\right\rVert ^{2}}.
\end{align*}
Since $\frac{\partial(M_{A}h)}{\partial M_{A}^{ij}}=\frac{\partial(M_{A}^{T}h)}{\partial M_{A}^{ji}}$, it follows that
\[
\frac{\partial A_{\beta,\gamma}}{\partial M_{A}^{ij}}+\frac{\partial A_{\beta,\gamma}}{\partial M_{A}^{ji}}=2(1-\beta)\left(\frac{\partial(M_{A}h)}{\partial M_{A}^{ij}}+\frac{\partial(M_{A}^{T}h)}{\partial M_{A}^{ji}}\right),
\]
and so $\|\Delta_\delta M_A^\sym\| = \mathcal{O}(\delta(1-\beta))$, and therefore $\max_k |\Delta_\delta \sigma_k(A_{\beta,\gamma}^\sym)| = \mathcal{O}(\delta(1-\beta)^2)$. Similarly, for the matrix $M_W$,
\begin{multline*}
\max_{k}\left|\Delta_{\delta}\lambda_{k}(W_{\beta,\gamma}^{\sym})\right|\leq(1-\beta)\left\lVert \Delta_{\delta}M_{W}^{\sym}\right\rVert \\
\leq\delta(1-\beta)\left\lVert \frac{\partial L}{\partial y}\right\rVert \left\lVert D\right\rVert \left\lVert h\right\rVert e^{(\| A_{\beta,\gamma}\| +\| W_{\beta,\gamma}\| )T}\sqrt{\sum_{i,j}\left\lVert \frac{\partial W_{\beta,\gamma}}{\partial M_{W}^{ij}}+\frac{\partial W_{\beta,\gamma}}{\partial M_{W}^{ji}}\right\rVert ^{2}}\\
=2\delta(1-\beta)^{2}\left\lVert \frac{\partial L}{\partial y}\right\rVert \left\lVert D\right\rVert \left\lVert h\right\rVert e^{(\| A_{\beta,\gamma}\| +\| W_{\beta,\gamma}\| )T}\sqrt{\sum_{i,j}\left(\frac{\partial(M_{W}h)}{\partial M_{W}^{ij}}+\frac{\partial(M_{W}^{T}h)}{\partial M_{W}^{ji}}\right)^{2}},
\end{multline*}
and hence $\max_k |\Delta_\delta \lambda_k(W_{\beta,\gamma}^\sym)| = \mathcal{O}(\delta(1-\beta)^2)$. \qedsymbol

In Figure \ref{fig:eigs}, we plot the most positive real part of the eigenvalues of $A_{\beta,\gamma}$ and $W_{\beta,\gamma}$ during training for the ordered MNIST task. As $\beta$ increases, the eigenvalues change less during training, remaining in the stability region provided by case (b) of Theorem \ref{thm:StabilityMain} for more of the training time.

\begin{figure}[h]
	\centering
	\begin{subfigure}[t]{0.45\textwidth}
		\centering
		\begin{overpic}[width=1\textwidth]{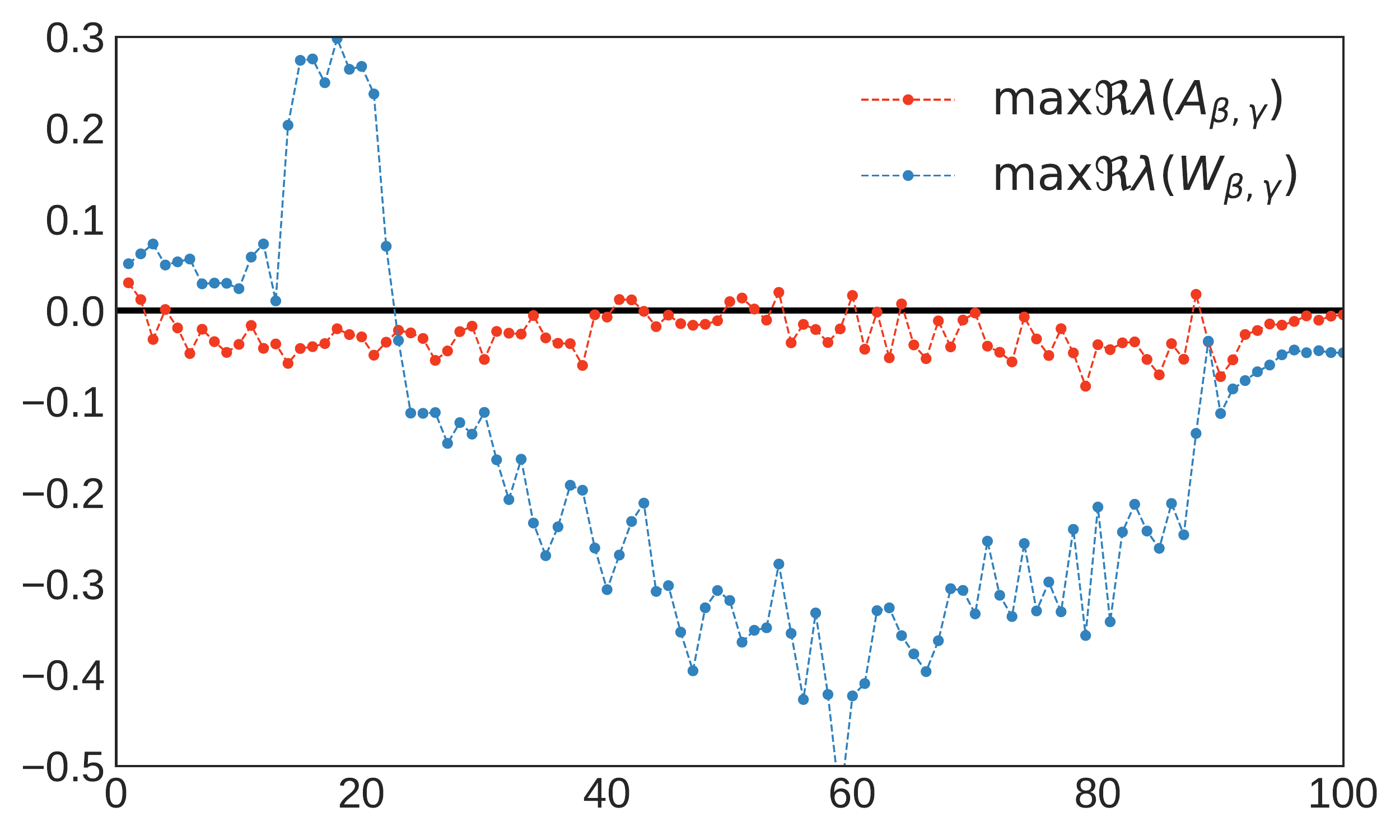}
			\put(-6,9){\rotatebox{90}{\footnotesize real part of eigenvalue}}			
			\put(31,-3){\footnotesize {number of epoch}}  	
		\end{overpic}\vspace{+0.4cm}
		\caption{$\beta = 0.65$}
	\end{subfigure}
	~
	\begin{subfigure}[t]{0.45\textwidth}
		\centering
		\begin{overpic}[width=1\textwidth]{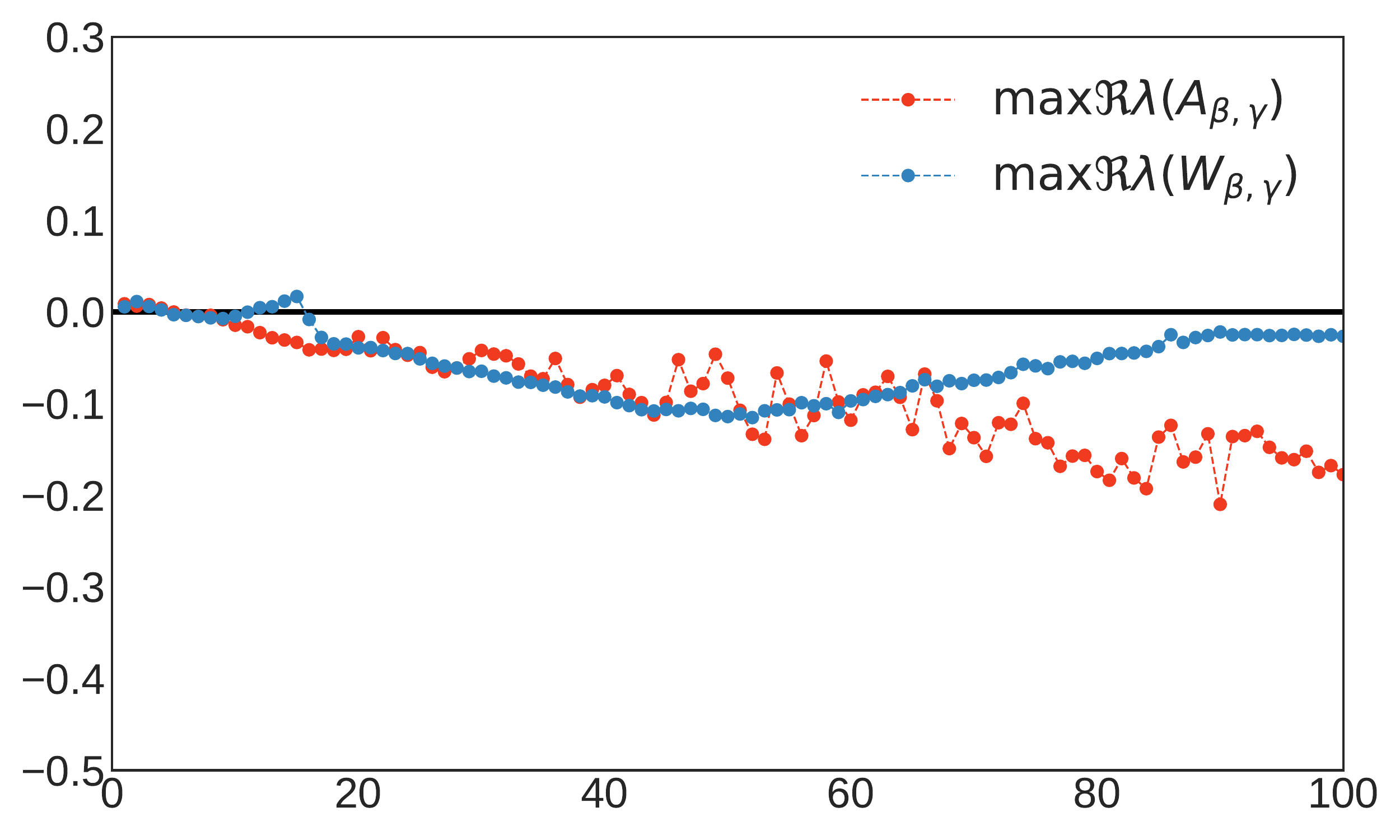} 
			\put(31,-3){\footnotesize {number of epoch}}  	
		\end{overpic}\vspace{+0.4cm}
		\caption{$\beta = 0.95$}
	\end{subfigure}\vspace{-0.2cm}

	\caption{The red lines track the largest real part of the eigenvalues of the hidden-to-hidden matrix $A_{\beta,\gamma}$ and the blue lines track the largest real part of the eigenvalues of  $W_{\beta,\gamma}$. We show results for two models trained on the ordered MNIST task with varying $\beta$.\label{fig:eigs}}
	
\end{figure}

\section{Additional Experiments}\label{sec:app_results}

\subsection{Sensitivity to Random Initialization for MNIST and TIMIT}\label{sec:init}

The hidden matrices are initialized by sampling weights from the normal distribution $\mathcal{N}(0,\sigma)$, where $\sigma$ is the variance, which can be treated as a tuning parameter. In our experiments we typically chose a small $\sigma$; see the Table~\ref{tab:tuning} for details.
To show that the Lipschitz RNN is insensitive to random initialization, we have trained each model with 10 different seeds. Table~\ref{tab:minmax} shows the maximum, average and minimum values obtained for each task. Note that higher values indicate better performance on the ordered and permuted MNIST tasks, while lower values indicate better performance on the TIMIT task.

\begin{table}[!t]
	\caption{Sensitivity to random initialization evaluated over 10 runs.}
	\label{tab:minmax}
	\centering
	\scalebox{0.91}{
		\begin{tabular}{l c c c c c c c c}
			\toprule
			Solver         & Task     & Minimum  & Average  & Maximum  & N & \# params\\
			\midrule 
			
			Euler & ordered MNIST  & 98.9\% & 99.0\%  & 99.0\% & 64 & $\approx$9K\\
			RK2 & ordered MNIST    & 98.9\% & 99.0\%  & 99.1\%  & 64 & $\approx$9K\\	
			Euler & ordered MNIST  & 99.0\% & 99.2\%  & 99.4\% & 128 & $\approx$34K\\
			RK2 & ordered MNIST    & 98.9\% & 99.1\%  & 99.3\% & 128 & $\approx$34K\\
			
			\midrule
			
			Euler & permuted MNIST   & 93.5\% & 93.8\%  & 94.2\% & 64 & $\approx$9K\\
			RK2 & permuted MNIST     & 93.5\% & 93.9\%  & 94.2\%  & 64 & $\approx$9K\\	
			
			Euler & permuted MNIST  & 95.6\% & 95.9\%  & 96.3\% & 128 & $\approx$34K\\
			RK2 & permuted MNIST    & 95.4\% & 95.8\%  & 96.2\%  & 128 & $\approx$34K\\

			\midrule
			
			Euler  & TIMIT (test MSE)  & 2.82 & 2.98   & 3.10 & 256 & $\approx$198K\\
			
			
			RK2  & TIMIT (test MSE)  & 2.76 & 2.81  & 2.84 & 256 & $\approx$198K\\			
			
			\bottomrule
	\end{tabular}}
\end{table}

\subsection{Ordered Pixel-by-Pixel and Noise-Padded CIFAR-10}\label{sec:cifar10}

The pixel-by-pixel CIFAR-10 benchmark problem that has recently been proposed by~\citep{chang2018antisymmetricrnn}. This task is similar to the pixel-by-pixel MNIST task, yet more challenging due to the increased sequence length and the more difficult classification problem.  
Similar to MNIST, we flatten the CIFAR-10 images to construct a sequence of length $1024$ in scanline order, where each element of the sequence consists of three pixels (one from each channel).

A variation of this problem is the noise-padded CIFAR-10 problem~\citep{chang2018antisymmetricrnn}, where we consider each row of an image as input at time step $t$. 
The rows from each channel are stacked so that we obtain an input of dimension $x\in \mathbb{R}^{96}$.
Then, after the $32$ time step which process the 32 row, we start to feed the recurrent unit with independent standard Gaussian noise for $968$ time steps. 
At the final point in $T=1000$, we use the learned hidden state for classification. 
This problem is challenging because only the first $32$ time steps contain signals. Thus, the recurrent unit needs to recall information from the beginning of the process.

Table~\ref{tab:cifar-table} provides a summary of our results. 
Our Lipschitz recurrent unit outperforms both the incremental RNN~\citep{Kag2020RNNs} and the antisymmetric RNN~\citep{chang2018antisymmetricrnn} by a significant margin. 
This impressively demonstrates that the Lipschitz unit enables the stable propagation of signals over long time~horizons.

\begin{table}[!t]
	\caption{Evaluation accuracy on pixel-by-pixel CIFAR-10 and noise padded CIFAR-10.}
	\label{tab:cifar-table}
	\centering
	\scalebox{0.85}{
		\begin{tabular}{l c c c c c c}
			\toprule
			Name                  &  ordered & noise padded  & N & \# params\\
			\midrule 
			
			LSTM baseline by~\citep{chang2018antisymmetricrnn} & 59.7\% & 11.6\% &128 & 69K\\        
			
			Antisymmetric RNN~\citep{chang2018antisymmetricrnn}  & 58.7\% & 48.3\% &256 & 36K\\
			
			Incremental RNN~\citep{Kag2020RNNs} & - & 54.5\% & 128 & -\\

			\midrule
			
			Lipschitz RNN using Euler (ours)  & {60.5\%} & {57.4\%} & 128 & 34K/46K\\			

			Lipschitz RNN using RK2 (ours)  & {60.3\%} & {57.3\%} & 128 & 34K/46K\\	
			
			\midrule		
			
			Lipschitz RNN using Euler (ours)  & \textbf{64.2\%} & \textbf{59.0\%} & 256 & 134K/158K\\

			Lipschitz RNN using RK2 (ours)  & {64.2\%} & {58.9\%} & 256 & 134K/158K\\
			
			\bottomrule
	\end{tabular}}
\end{table}

\subsection{Penn Tree Bank (PTB)}

\subsubsection{Character Level Prediction}\label{sec:ptb_character}

Next, we consider a character level language modeling task using the Penn Treebank Corpus (PTB)~\citep{marcus1993building}. Specifically, this task studies how well a model can predict the next character in a sequence of text.
The dataset is composed of a train / validation / test set, where $5017$K characters are used for training, $393$K characters are used for validation and $442$K characters are used for testing.  
For our experiments, we used the publicly available implementation of this task by~\citet{kerg2019non}, which computes the performance in terms of mean bits per character (BPC).

Table~\ref{tab:PTB_character} shows the results for back-propagation through time (BPTT) over 150 and 300 time steps, respectively. The Lipschitz RNN performs slightly better then the exponential RNN and the non-normal RNN on this task. \citep{kerg2019non} notes that orthogonal hidden-to-hidden matrices are not particular well-suited for this task. Thus, it is not surprising that the Lipschitz unit has a small advantage here. 

For comparison, we have also tested the Antisymmetric RNN~\citep{chang2018antisymmetricrnn} on this task. The performance of this unit is considerably weaker as compared to our Lipschitz unit. 
This suggests that the Lipschitz RNN is more expressive and improves the propagation of meaningful signals over longer time scales.

\begin{table}[!h]
	\caption{Evaluation accuracy on PTB for character-level prediction for different sequence lengths $T$.  The * indicate results that were adopted from \citet{kerg2019non}.}
	\label{tab:PTB_character}
	\centering
	\scalebox{0.85}{
		\begin{tabular}{l c c c c c c}
			\toprule
			Name                  &  $T_{PTB}=150$ & $T_{PTB}=300$  & \# params\\
			\midrule 
			
			RNN baseline by~\citep{arjovsky2016unitary} & 2.89 & 2.90 & $\approx$1.32M\\  
			
			RNN-orth~\citep{pmlr-v48-henaff16} (*) & 1.62 & 1.66 & $\approx$1.32M\\ 			
			
			EURNN~\citep{jing2017tunable} (*) & 1.61 & 1.62 & $\approx$1.32M\\ 
			
			
			Exponential RNN~\citep{lezcano2019cheap} (*) & 1.49 & 1.52  & $\approx$1.32M\\ 
			
			Non-normal RNN~\citep{kerg2019non} & 1.47 & 1.49  & $\approx$1.32M\\ 
			
			Antisymmteric RNN  & 1.60 & 1.64 & $\approx$1.32M\\ 
			
			Lipschitz RNN using Euler (ours)  & \textbf{1.43} & \textbf{1.46}  & $\approx$1.32M\\
			
			\bottomrule
	\end{tabular}}
\end{table}

\subsubsection{Word-Level Prediction}\label{sec:ptb_word}

In addition to character-level prediction, we also consider word-level prediction using the PTB corpus.
For comparison with other state-of-the-art units, we consider the setup by \citet{kusupati2018fastgrnn}, who use a sequence length of $300$.
Table~\ref{tab:PTB_word} shows results for back-propagation through time (BPTT) over 300 time steps. The Lipschitz RNN performs slightly better than the other RNNs on this task and the baseline LSTM for the test perplexity metric reported by \citet{kusupati2018fastgrnn}.

\begin{table}[!h]
	\caption{Evaluation accuracy on PTB for word-level prediction. The * indicate results adopted from \citet{kusupati2018fastgrnn}. Note that here the parameters for the hidden-to-hidden units are reported.}
	\label{tab:PTB_word}
	\centering
	\scalebox{0.85}{
		\begin{tabular}{l c c c c c c}
			\toprule
			Name                  & validation perplexity & test perplexity  & N & \# params\\
			\midrule 
			
			LSTM (*) & - & 117.41 & - & 210K\\  
			
			SpectralRNN (*)  & - & 130.20  & - & 24.8K\\	
			
			FastRNN (*)   & - & 127.76  & - & 52.5K\\	
			
			FastGRNN-LSQ (*) & - & 115.92  & - & 52.5K\\
			
			FastGRNN (*)  & - & 116.11  & - & 52.5K\\															
			
			Incremental RNN~\citep{Kag2020RNNs} & - & 115.71 & - & 29.5K\\
			
			Lipschitz RNN using Euler (ours) & 124.55  & \textbf{115.36} & 160 & 50K\\
			
			\bottomrule
	\end{tabular}}
\end{table}

\section{Tuning Parameters}

For tuning we utilized a standard training procedure using a non-exhaustive random search within the following plausible ranges for the our weight parameterization $\beta={0.65,0.7,0.75,0.8}$, $\gamma=[0.001, 1.0]$. For Adam we explored learning rates between 0.001 and 0.005, and for SGD we considered 0.1. For the step size we explored values in the range 0.001 to 1.0. We did not perform an automated grid search and thus expect that the models can be further fine-tuned.

The tuning parameters for the different tasks that we have considered are summarized in Table~\ref{tab:tuning}. 

For pixel-by-pixel MNIST and CIFAR-10, we use Adam for minimizing the objective. We train all our models for $100$ epochs, with scheduled learning rate decays at epochs $\{90\}$. We do not use gradient clipping during training.
Figure~\ref{fig:mnist_testacc} shows the test accuracy curves for our Lipschitz RNN for the ordered and permuted MNIST classification tasks.

For TIMIT we use Adam with default parameters for minimizing the objective. We also tried Adam using betas (0.0, 0.9) as well as RMSprop with $\alpha=0.9$, however, Adam with default values worked best in our experiments. We train the model for $1200$ epochs without learning-rate decay. Similar to \cite{kerg2019non} we train our model with gradient clipping, however, we observed that the performance of our model is relatively insensitive to the clipping value. 

For the character level prediction task, we use Adam with default parameters for minimizing the objective, while we use RMSprop with $\alpha=0.9$ for the word level prediction task. We train the model for $200$ epochs for the character-level task, and for 500 epochs for the word-level task.

\begin{table}[!t]
	\caption{Tuning parameters used for our experimental results and the performance evaluated with 12 different seed values for the parameter initialization of the model.}
	\label{tab:tuning}
	\centering
	\scalebox{0.85}{
		\begin{tabular}{l c c c c c c c c }
			\toprule
			Name           &  N & lr  & decay & $\beta$ & $\gamma_a$ & $\gamma_w$ & $\epsilon$ & $\sigma$ \\
			\midrule 
			Ordered MNIST  & 64 & 0.003  & 0.1 & 0.75 &  0.001 & 0.001 & 0.03 & $0.1 / 64$  \\
			Ordered MNIST  & 128 & 0.003 & 0.1  & 0.75 &  0.001 & 0.001 & 0.03 & $0.1 / 128$  \\
			\midrule 
			Permuted MNIST & 64 & 0.0035 & 0.1 & 0.75 & 0.001 & 0.001 & 0.03 & $0.1 / 128$ \\		
			Permuted MNIST & 128 & 0.0035 & 0.1 & 0.75 & 0.001 & 0.001 & 0.03 & $0.1 / 128$  \\
			\midrule
			Ordered CIFAR10 & 256 & 0.1 & 0.2 & 0.65 & 0.001 & 0.001 & 0.01 & $6 / 256 $  \\
			Noise-padded CIFAR10 & 256 & 0.1 & 0.2 & 0.75 & 0.001 & 0.001 & 0.01 & $6 / 256$ \\
			\midrule
			TIMIT & 256 & 0.001 & - & 0.8 & 0.8 & 0.001 & 0.9 & $12 / 256 $  \\
			\midrule
			PTB character-level 150 & 750 & 0.005 & - & 0.8 & 0.5 & 0.001 & 0.1 & $12 / 256 $ \\
			PTB character-level 300 & 750 & 0.005 & - & 0.8 & 0.5 & 0.001 & 0.1 & $12 / 256 $  \\			
			\midrule
			PTB word-level & 160 & 0.1 & - & 0.8 & 0.9 & 0.001 & 0.01 & $10 / 256 $  \\			
			\bottomrule
	\end{tabular}}
\end{table}

\begin{figure}[!t]
	\centering
	\begin{subfigure}[t]{0.45\textwidth}
		\centering
		\begin{overpic}[width=1\textwidth]{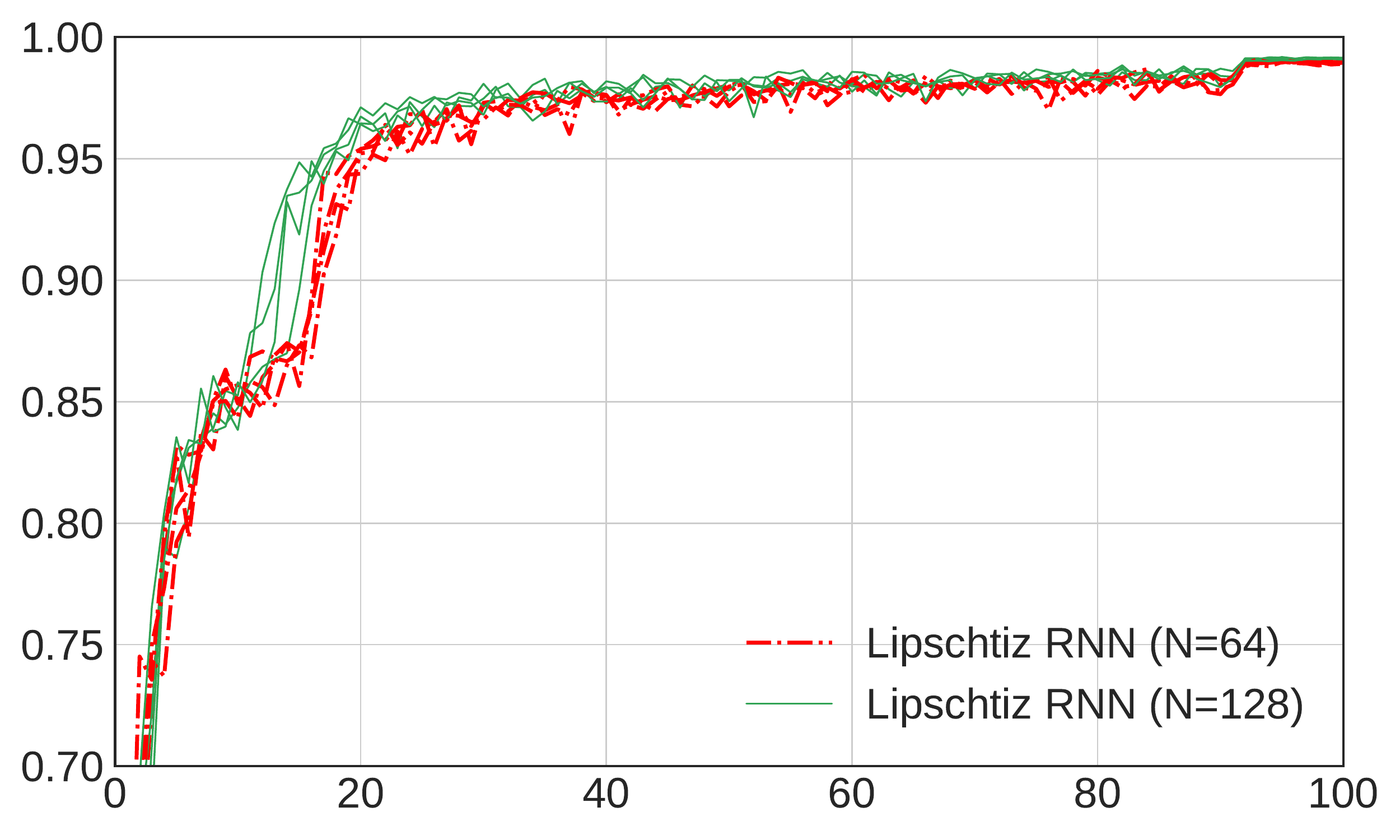}
			\put(-4,18){\rotatebox{90}{\footnotesize test accuracy}}			
			\put(46,-3){\footnotesize {epochs}}  	
		\end{overpic}\vspace{+0.2cm}		
		
		\caption{Ordered pixel-by-pixel MNIST}
	\end{subfigure}\hspace{+0.3cm}
	~
	\begin{subfigure}[t]{0.45\textwidth}
		\centering
		\begin{overpic}[width=1\textwidth]{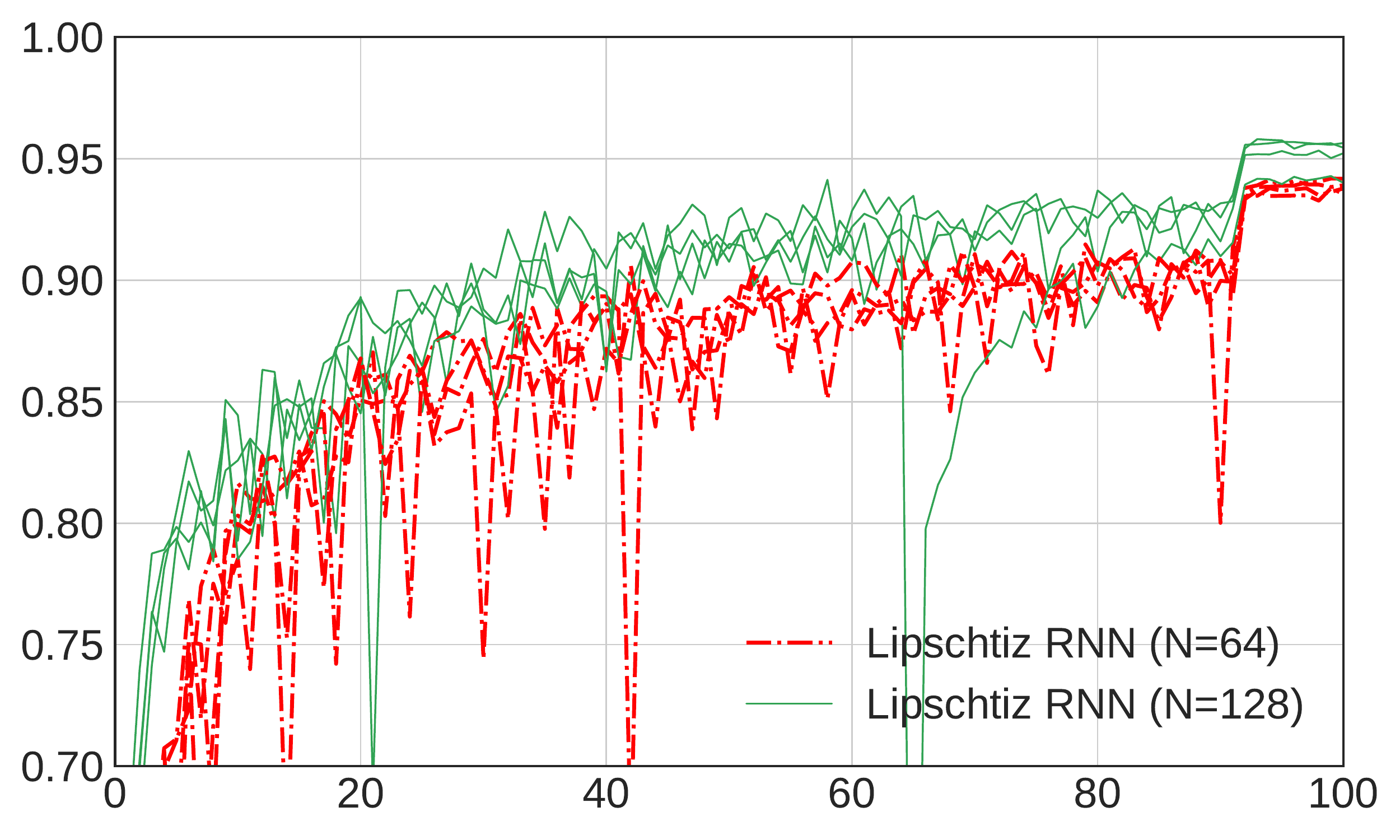} 
			\put(-4,18){\rotatebox{90}{\footnotesize test accuracy}}			
			\put(46,-3){\footnotesize {epochs}} 		
		\end{overpic}\vspace{+0.2cm}			
		\caption{Permuted pixel-by-pixel MNIST.}
	\end{subfigure}
	
	\caption{Test accuracy for the Lipschitz RNN for different classification tasks.}
	\label{fig:mnist_testacc}
\end{figure}

\end{document}